\setlist{nolistsep}
\newcounter{ALC@tempcntr}
\newcommand{\subfigurewidth}{0.18\textwidth}
\newtheorem{infproblem}{Informal~Problem}
\newtheorem{claim}{Claim}
\newcommand{\hide}[1]{}
\newcommand{\bit}{\begin{compactitem}}
\newcommand{\eit}{\end{compactitem}}
\newcommand{\ben}{\begin{compactenum}}
\newcommand{\een}{\end{compactenum}}
\newcommand{\method}{\textsc{AutoTen}\xspace}
\newcommand{\methodplain}{{AutoTen}\xspace}
\newcommand{\kronmatvec}{\textsc{KronMatVec}\xspace}
\newcommand{\cpapr}{\texttt{CP\_APR}\xspace}
\newcommand{\cpnmu}{\texttt{CP\_NMU}\xspace}
\newcommand{\enron}{\texttt{ENRON}\xspace}
\newcommand{\reality}{\texttt{Reality Mining}\xspace}
\newcommand{\facebook}{\texttt{Facebook}\xspace}
\newcommand{\taxi}{\texttt{Taxi}\xspace}
\newcommand{\dblp}{\texttt{DBLP}\xspace}
\newcommand{\netflix}{\texttt{Netflix}\xspace}
\newcommand{\amazonco}{\texttt{Amazon co-purchase}\xspace}
\newcommand{\amazonmeta}{\texttt{Amazon metadata}\xspace}
\newcommand{\yelp}{\texttt{Yelp}\xspace}
\newcommand{\airport}{\texttt{Airport}\xspace}
\newcommand{\tensor}[1]{\underline{\mathbf{#1}}}
\newcommand{\codeurl}{\url{http://www.cs.cmu.edu/~epapalex/src/AutoTen.zip}}
\begin{document}
\permission{}
\conferenceinfo{}{}
\copyrightetc{}
\clubpenalty=10000
\widowpenalty = 10000

\title{Automatic Unsupervised Tensor Mining with Quality Assessment}

\numberofauthors{1}

\author{
\alignauthor
Evangelos E. Papalexakis\\
       \affaddr{Carnegie Mellon University}\\
       \email{epapalex@cs.cmu.edu}
}

\maketitle

%
\maketitle
\begin{abstract}
A popular tool for unsupervised modelling and mining multi-aspect data is tensor decomposition. In an exploratory setting, where and no labels or ground truth are available how can we automatically decide how many components to extract? How can we assess the quality of our results, so that a domain expert can factor this quality measure in the interpretation of our results? 
In this paper, we introduce \method, a novel automatic unsupervised tensor mining algorithm with minimal user intervention, which leverages and improves upon heuristics that assess the result quality. We extensively evaluate \method's performance on synthetic data, outperforming existing baselines on this very hard problem. Finally, we apply \method on a variety of real datasets, providing insights and discoveries. We view this work as a step towards a fully automated, unsupervised tensor mining tool that can be easily adopted by practitioners in academia and industry.
\end{abstract}

\begin{keywords}
Tensors, Tensor Decompositions, Unsupervised Learning, Exploratory Analysis
\end{keywords}

\section{Introduction}
\label{sec:introduction}

Tensor decompositions and their applications in mining multi-aspect datasets are ubiquitous and ever increasing in popularity. Data Mining application of these techniques has been  largely pioneered by the work of Kolda et al. \cite{kolda2005higher} where the authors introduce a topical aspect to a graph between webpages, and extend the popular HITS algorithm in that scenario. Henceforth, the field of multi-aspect/tensor data mining has witnessed rich growth with prime examples of applications being citation networks \cite{kolda2008scalable}, computer networks\cite{kolda2008scalable,maruhashi2011multiaspectforensics,papalexakis2012parcube}, Knowledge Base data \cite{kang2012gigatensor,papalexakis2012parcube,chang2013multi,chang2014typed}, and social networks \cite{bader2007temporal,kolda2008scalable,lin2009metafac,papalexakis2012parcube,jiang2014fema}, to name a few.

Tensor decompositions are undoubtedly a very powerful analytical tool with a rich variety of applications. However there exist research challenges in the field of data mining that need to be addressed, in order for tensor decompositions to claim their position as a de-facto tool for practicioners.

One challenge, which has received considerable attention, is the one of making tensor decompositions scalable to today's web scale. For instance, Facebook has around 2 billion users at the time of writing of this paper and is ever growing, and making tensor decompositions able to work on even small portions of the entire Facebook network is imperative for the adoption of these techniques by such big players. Very frequently, data that fall under the aforementioned category turn out to be highly sparse; the reason is that, e.g. each person on Facebook interacts with only a few hundreds of the users. Computing tensor decompositions for highly sparse scenarios is a game changer, and exploiting sparsity is key in scalability. The work of Kolda et al. \cite{kolda2005higher,SIAM-67648} introduced the first such approach of exploiting sparsity for scalability. Later on, distributed approaches based on the latter formulation \cite{kang2012gigatensor}, or other scalable approaches \cite{papalexakis2012parcube,erdos2013walk,beutelflexifact,de2014distributed} have emerged. By no means do we claim that scalability is a solved problem, however, we point out that there has been significant attention to it.

\begin{figure}[!ht]
	\begin{center}
		\includegraphics[width = 0.45\textwidth]{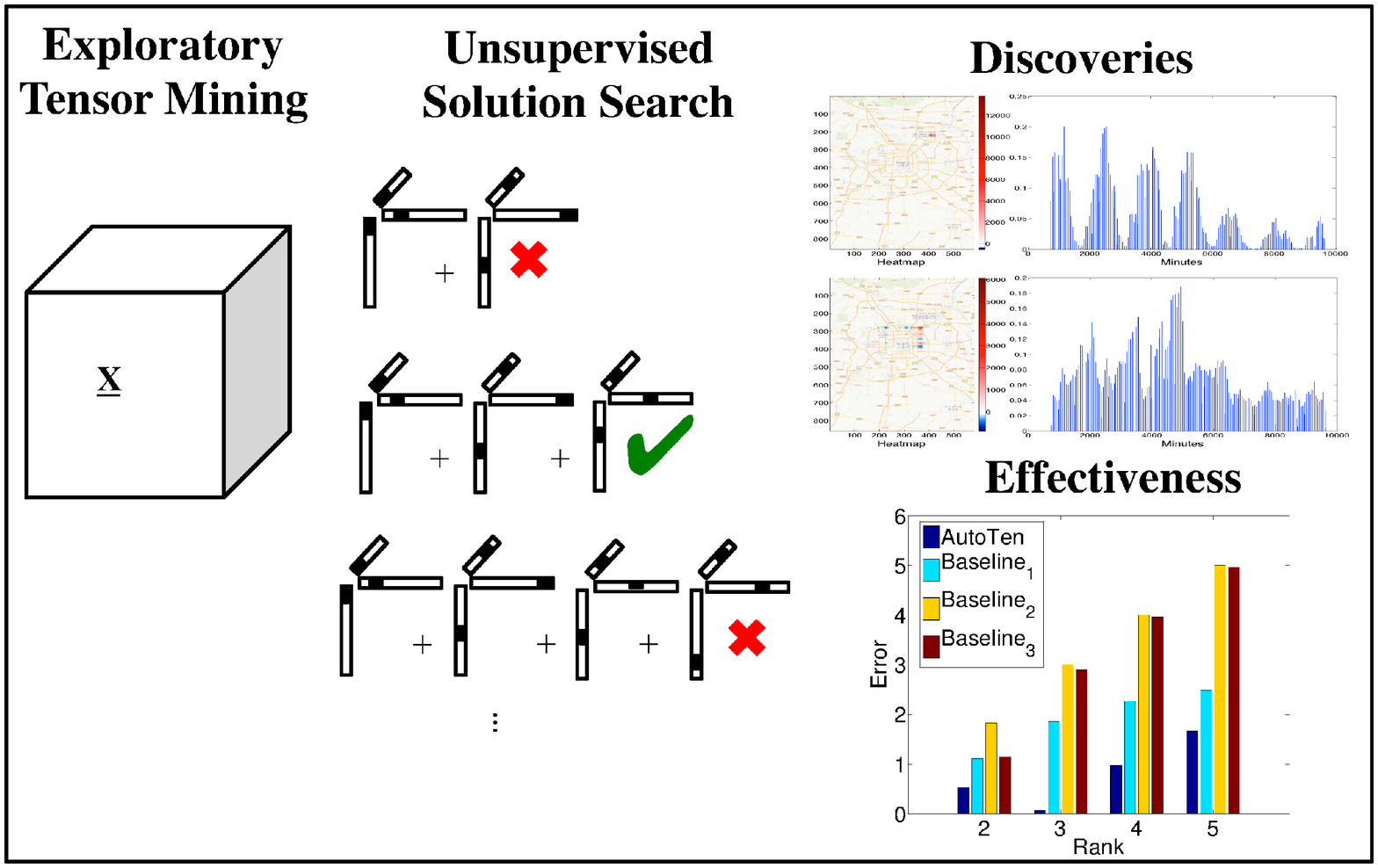}
	\end{center}
	\caption{Starting from an unsupervised, exploratory application, \method automatically determines a solution with high quality, outperforming existing baselines, and enables discoveries in real data.}
\end{figure}

The main focus of this work, however, is on another, relatively less explored territory; that of assessing the {\em quality} of a tensor decomposition. In a great portion of tensor data mining, the task is exploratory and unsupervised: we are given a dataset, usually without any sort of ground truth, and we seek to extract {\em interesting} patterns or concepts from the data. It is crucial, therefore, to know whether a pattern that we extract actually models the data at hand, or whether it is merely modelling noise in the data. Especially in the age of Big Data, where feature spaces can be vast, it is imperative to have a measure of quality and avoid interpreting noisy, random variation that always exists in the data. Determining the ``right'' number of components in a tensor is a very hard problem \cite{hillar2013most}. This is why, many seminal exploratory tensor mining papers, understandably, set the number of components manually \cite{kolda2006tophits,sun2006beyond,bader2007temporal,kolda2008scalable}. When there is a specific task at hand, e.g. link prediction \cite{dunlavy2011temporal},  recommendation \cite{rendle2010pairwise}, and supervised learning \cite{tao2005supervised,he2014dusk}, that entails some measure of success, then there is some  procedure (e.g. cross-validation) for selecting a good number of latent components which unfortunately cannot generalize to the case where labels or ground truth are absent.

However, not all hope is lost. There have been very recent approaches following the Minimum Description Length (MDL) principle \cite{araujo2014com2,metzler2015clustering}, where the MDL cost function usually depends heavily on the application at hand (e.g. community detection or boolean tensor clustering respectively). Additionally, there have been Bayesian approaches \cite{zhaobayesian} that, as in the MDL case, do not require the number of components as input. These approaches are extremely interesting, and we reserve their deeper investigation in future work, however in this work, we choose to operate on top of a different, very intuitive approach which takes into account properties of the PARAFAC decomposition \cite{PARAFAC} and is {\em application independent}, requiring no prior knowledge about the data; there exists highly influential work in the Chemometrics literature \cite{bro1998multi} that introduces heuristics for determining a good rank for tensor decompositions. Inspired by and drawing from \cite{bro1998multi}, we provide a comprehensive method for mining multi-aspect datasets using tensor decompositions.

Our contributions are:
\begin{itemize}\setlength{\itemsep}{0.0mm}
	\item{\bf Algorithms} We propose \method, a comprehensive methodology on mining multi-aspect datasets using tensors, which minimizes manual trial-and-error intervention and provides quality characterization of the solution (Section \ref{sec:method_final}). Furthermore, we extend the quality assessment heuristic of \cite{bro1998multi} assuming KL-divergence, which has been shown to be more effective in highly sparse, count data \cite{chi2012tensors} (Section \ref{sec:method_corcondia}).
	\item{\bf Evaluation \& Discovery} We conduct a large scale study on 10 real datasets, exploring the structure of hidden patterns within these datasets. To the best of our knowledge, this is the first such broad study. (Section \ref{sec:study1}). As a data mining case study, we apply \method to two real datasets discovering meaningful patterns (Section \ref{sec:study2}). Finally, we extensively evaluate our proposed method in synthetic data (Section \ref{sec:experiments}).
\end{itemize}
In order to encourage reproducibility, most of the datasets used are public, and we make our code publicly available at {\small \\ \codeurl}. 

\section{Background}
\label{sec:formulation}
Table \ref{tab:symbols} provides an overview of the notation used in this and subsequent sections.

\begin{table}[htbp]
\small
\begin{center}
\begin{tabular}{l | l}
    {\bf Symbol}      & {\bf Definition} \\ \hline
	$\tensor{X}, \mathbf{X}, \mathbf{x}, x$ & Tensor, matrix, column vector, scalar \\ \hline
	$\circ$ & outer product \\ \hline
	$vec(~)$ & vectorization operator \\\hline
	$\otimes$ & Kronecker product \\ \hline
	$\mathbf{X}^\dag$ & Moore-Penrose pseudoinverse \\\hline
	$\ast ~~\oslash$ & element-wise multiplication and division \\\hline
    $\mathbf{A}^\dag$ & Moore-Penrose Pseudoinverse of $\mathbf{A}$  \\\hline
    $D_{KL}(a||b)$ & KL-Divergence \\\hline
    $\|\mathbf{A}\|_F$ & Frobenius norm \\\hline
    \multirow{2}{*}{\kronmatvec} & efficient computation of  \\ & $\mathbf{y} = \left(\mathbf{A}_1 \otimes \mathbf{A}_2 \otimes \cdots \otimes \mathbf{A}_n \right)\mathbf{x} $
    \cite{buis1996efficient} \\\hline
    $\mathbf{x}(i)$ & $i$-th entry of $\mathbf{x}$ (same for matrices and tensors)\\ \hline
    $\mathbf{X}(:,i)$ & spans the entire $i$-th column of $\mathbf{X}$ (same for tensors) \\ \hline
    $\mathbf{x}^{(k)}$ & value at the $k$-th iteration \\ \hline 
    \cpnmu & non-negative, Frobenius norm PARAFAC \cite{tensortoolbox}\\ \hline
    \cpapr & KL-Divergence PARAFAC \cite{chi2012tensors} \\ \hline
\end{tabular}
\caption{Table of symbols}
\label{tab:symbols}
\end{center}
\end{table}

\subsection{Brief Introduction to Tensor Decompositions}

Given a tensor $\tensor{X}$, we can decompose it according to the CP/PARAFAC decomposition \cite{PARAFAC} (henceforth referred to as PARAFAC) as a sum of rank-one tensors:
\[
	\tensor{X} \approx \displaystyle{ \sum_{f=1}^{F} \mathbf{a}_f \circ \mathbf{b}_f \circ \mathbf{c}_f }
\]
where the $(i,j,k)$ entry of $\mathbf{a}_r\circ \mathbf{b}_r \circ \mathbf{c}_r$ is $\mathbf{a}_r(i)\mathbf{b}_r(j)\mathbf{c}_r(k)$. Usually, PARAFAC is represented in its matrix form $[\mathbf{A,B,C} ]$, where the columns of matrix $\mathbf{A}$ are the $\mathbf{a}_r$ vectors (and accordingly for $\mathbf{B,C}$). 
The PARAFAC decomposition is especially useful when we are interested in extracting the true latent factors that generate the tensor. In this work, we choose the PARAFAC decomposition as our tool, since it admits a very intuitive interpretation of its latent factors; each component can be seen as soft co-clustering of the tensor, using the high values of vectors $\mathbf{a}_r, \mathbf{b}_r, \mathbf{c}_r$ as the membership values to co-clusters.

Another very popular Tensor decomposition is Tucker3 \cite{kroonenberg1980principal}, where a tensor is decomposed into rank-one factors times a core tensor:
	\[
	\tensor{X} \approx \displaystyle{ \sum_{p=1}^{P} \sum_{q=1}^{Q} \sum_{r=1}^{R} \tensor{G}(p,q,r)\mathbf{u}_p \circ \mathbf{v}_q \circ\mathbf{w}_r }
	\]
where $\mathbf{U,V,W}$ are orthogonal.
The Tucker3 model is especially used for compression. Furthermore, PARAFAC can be seen as a restricted Tucker3 model, where the core tensor $\tensor{G}$ is super-diagonal, i.e. non-zero values are only in the entries where $i=j=k$. This observation will be useful in order to motivate the CORCONDIA diagnostic.

Finally, there also exist more expressive, but harder to interpret models, such as the Block Term Decomposition (BTD) \cite{de2008decompositions} , however, we reserve future work for their investigation.

\subsection{Brief Introduction to CORCONDIA}
As outlined in the Introduction, in the chemometrics literature, there exists a very intuitive heuristic by the name of CORCONDIA \cite{bro2003new}, which can serve as a guide in judging how well a given PARAFAC decomposition is modelling a tensor.

In a nutshell, the idea behind CORCONDIA is the following: Given a tensor $\tensor{X}$ and its PARAFAC decomposition $\mathbf{A,B,C}$, one could imagine fitting a Tucker3 model where matrices $\mathbf{A,B,C}$ are the factors of the Tucker3 decomposition and $\tensor{G}$ is the core tensor (which we need to solve for). Since, as we already mentioned, PARAFAC can be seen as a restricted Tucker3 decomposition with super-diagonal core tensor, if our PARAFAC modelling of $\tensor{X}$ using $\mathbf{A,B,C}$ is modelling the data well, the core tensor $\tensor{G}$ should be as close to super-diagonal as possible. If there are deviations from the super-diagonal, then this is a good indication that our PARAFAC model is somehow flawed (either the decomposition rank is not appropriate, or the data do not have the appropriate structure).
We can pose the problem as the following least squares problem:
\[
\min_{\tensor{G}} 	\| vec\left( \tensor{X} \right) - \left( \mathbf{A}\otimes \mathbf{B} \otimes \mathbf{C} \right) vec \left( \tensor{G} \right)\|_F^2
\]
with the least squares solution:
$
	vec\left(\tensor{G} \right) = \left( \mathbf{A}\otimes \mathbf{B} \otimes \mathbf{C} \right)^\dag vec \left( \tensor{X} \right)
$

After computing $\tensor{G}$, the CORCONDIA diagnostic can be computed as
{\small
$
	\displaystyle{ c = 100\left(1 - \frac{\sum_{i=1}^{F} \sum_{j=1}^{F} \sum_{k=1}^{F} \left(\tensor{G}(i,j,k) - \tensor{I}(i,j,k)\right)^2 }{F}  \right)},
$
}
where $\tensor{I}$ is a super-diagonal tensor with ones on the $(i,i,i)$ entries. For a perfectly super-diagonal $\tensor{G}$ (i.e. perfect modelling), $c$ will be 100. 
One can see that for rank-one models, the metric will always be 100, because the rank one component can trivially produce a single element ``super-diagonal'' core; thus, CORCONDIA is applicable for rank two or higher.
According to \cite{bro2003new}, values below 50 show some imperfection in the modelling or the rank selection; the value can also be negative, showing severe problems with the modelling. In \cite{bro2003new}, some of the chemical data analyzed have perfect, low rank PARAFAC structure, and thus expecting $c>50$ is reasonable. In many data mining applications, however, due to high data sparsity, data cannot have such perfect structure, but an {\em approximation} thereof using a low rank model is still very valuable. Thus, in our case, we expand the spectrum of acceptable solutions with reasonable quality to include smaller, positive values of $c$ (e.g. 20 or higher).

\subsection{Scaling Up CORCONDIA}
As  we mention in the Introduction CORCONDIA as it's introduced in \cite{bro1998multi} is suitable for small and dense data. However, this contradicts the area of interest of the vast majority of data mining applications. To that end, very recently \cite{papalexakis2015fastcorcondia} we extended CORCONDIA to the case where our data are large but sparse, deriving a fast and efficient algorithm. Key behind \cite{papalexakis2015fastcorcondia} is avoiding to pseudoinvert $\left( \mathbf{A \otimes B \otimes C} \right)$

In order to achieve the above, we need to reformulate the computation of CORCONDIA.
The pseudoinverse 
$
\left( \mathbf{A} \otimes \mathbf{B} \otimes \mathbf{C} \right)^\dag
$
can be rewritten as
{\small
\[
\left( \mathbf{V_a} \otimes \mathbf{V_b} \otimes \mathbf{V_c} \right) \left( \mathbf{\Sigma_a}^{-1} \otimes \mathbf{\Sigma_b}^{-1} \otimes \mathbf{\Sigma_c}^{-1} \right)  \left( \mathbf{U_a}^T \otimes \mathbf{U_b}^T \otimes \mathbf{U_c}^T \right) 
\]
}
where $\mathbf{A} = \mathbf{U_a \Sigma_a {V_a}^T}$, $\mathbf{B} = \mathbf{U_b \Sigma_b {V_b}^T}$, and $\mathbf{C} = \mathbf{U_c \Sigma_c {V_c}^T}$ (i.e. the respective Singular Value Decompositions).

After we rewrite the least squares problem in the above form, we can efficiently carry out a series of Kronecker products times a vector very efficiently, {\em without} materializing the (potentially big) Kronecker product. In \cite{papalexakis2015fastcorcondia} we use the algorithm proposed in \cite{buis1996efficient} to do this, which we will henceforth refer to as \kronmatvec operation:
\[
	\mathbf{y} = \left(\mathbf{A}_1 \otimes \mathbf{A}_2 \otimes \cdots \otimes \mathbf{A}_n \right)\mathbf{x} 
\]

What we have achieved thus far is extending CORCONDIA to large and sparse data, assuming Frobenius norm. This assumption postulates that the underlying data distribution is Gaussian. However, recently \cite{chi2012tensors} showed that for sparse data that capture counts (e.g. number of messages exchanged), it is more beneficial to postulate a Poisson distribution, therefore using the KL-Divergence as a loss function. This has been more recently adopted in \cite{ho2014marble} showing very promising results in biomedical applications. Therefore, one natural direction, which we follow in the first part of the next section, is to extend CORCONDIA for this scenario.

\section{Proposed Methods}
\label{sec:method}

In exploratory data mining applications, the case is very frequently the following: we are given a piece of (usually very large) data that is of interest to a domain expert, and we are asked to identify regular and irregular patterns that are potentially useful to the expert who is providing the data. During this process, very often, the analysis is carried out in a completely unsupervised way, since ground truth and labels are either very expensive or impossible to obtain. In our context of tensor data mining, here is the problem at hand:

\begin{infproblem}
Given a tensor $\tensor{X}$ without ground truth or labelled data, how can we analyze it using the PARAFAC decomposition so that we can also:
\begin{enumerate}
	\item Determine automatically a good number of components for the decomposition
	\item Provide quality guarantees for the resulting decomposition
	\item Minimize human intervention and trial-and-error testing
\end{enumerate}
\end{infproblem}

In order to attack the above problem, first, in Section \ref{sec:method_corcondia} we describe how we can derive a fast and efficient metric of the quality of a decomposition, assuming the KL-Divergence. Finally, in \ref{sec:method_final}, we introduce \method, our unified algorithm for automatic tensor mining with minimal user intervention and quality characterization of the solution.

\label{sec:method}
\subsection{Quality Assessment with KL-Divergence}
\label{sec:method_corcondia}
As we saw in the description of CORCONDIA with Frobenius norm loss, its computation requires solving the least squares problem:
\[
\min_{\tensor{G}} 	\| vec\left( \tensor{X} \right) - \left( \mathbf{A}\otimes \mathbf{B} \otimes \mathbf{C} \right) vec \left( \tensor{G} \right)\|_F^2
\]

In the case of the \cpapr modelling, where the loss function is the KL-Divergence, the minimization problem that we need to solve is:
\begin{equation}
	\min_{\mathbf{x}} D_{KL}(\mathbf{y} || \mathbf{W}\mathbf{x})
	\label{eq:kl_regression}
\end{equation}
where in our case, $\mathbf{W} = \mathbf{A\otimes B \otimes C}$.

Unlike the Frobenius norm case, where the solution to the problem is the Least Squares estimate, in the KL-Divergence case, the problem does not have a closed form solution. Instead, iterative solutions apply. The most prominent approach to this problem is via an optimization technique called {\em Majorization-Minimization} (MM) or {\em Iterative Majorization} \cite{heiser1995convergent}. In a nutshell, in MM, given a function that is hard to minimize directly, we derive a ``majorizing'' function, which is always greater than the function to be minimized, except for a support point where it is equal; we minimize the majorizing function, and iteratively updated the support point using the minimizer of that function. This procedure converges to a local minimum. For the problem of Eq. \ref{eq:kl_regression}, \cite{fevotte2011algorithms} and subsequently \cite{chi2012tensors}, employ the following update rule for the problem, which is used iteratively until convergence to a stationary point.

\begin{equation}
	\mathbf{x}(j)^{(k)} = \mathbf{x}(j)^{(k-1)}  (  \frac{ \sum_i \mathbf{W}(i,k)  (  \frac{\mathbf{y}(j)}{\tilde{\mathbf{y}}}(j)^{(k-1)} )}{  \sum_i \mathbf{W}(i,j) } )
	\label{eq:update_rule_naive}
\end{equation}
where $\tilde{\mathbf{y}}^{(k-1)} = \mathbf{W}\mathbf{x}^{(k-1)}$, and $k$ denotes the $k$-th iteration index.

The above solution is generic for any structure of $\mathbf{W}$. Remember, however, that $\mathbf{W}$ has very specific Kronecker structure which we should exploit. Additionally, suppose that we have a $10^4 \times 10^4 \times 10^4$ tensor; then, the large dimension of $\mathbf{W}$ will be $10^{12}$. If we attempt to materialize, store, and use $\mathbf{W}$ throughout the algorithm, that can prove catastrophic to the algorithm's performance. We can exploit the Kronecker structure of $\mathbf{W}$ so that we break down Eq. \ref{eq:update_rule_naive} into pieces, each one which can be computed efficiently, given the structure of $\mathbf{W}$.
The first step is to decompose the expression of the numerator of Eq. \ref{eq:update_rule_naive}. In particular, we equivalently write
$
 \mathbf{x}^{(k)} = \mathbf{x}^{(k-1)} \ast \mathbf{z}_2
$
where 
$
	\mathbf{z}_2 = \mathbf{W}^T \mathbf{z}_1
$
and 
 $\mathbf{z}_1 = \mathbf{y}\oslash \tilde{\mathbf{y}}$. 
Due to the Kronecker structure of $\mathbf{W}$:
\[ 
 \mathbf{z}_2 = \kronmatvec( \{ \mathbf{A}^T, \mathbf{B}^T, \mathbf{C}^T  \},\mathbf{z}_1)
\]
Therefore, the update to $\mathbf{x}^{(k)} $ is efficiently calculated in the three above steps.
The normalization factor of the equation is equal to:
$
	\mathbf{s}(j) = \sum_i \mathbf{W}(i,j).
$
Given the Kronecker structure of $\mathbf{W}$ however, the following holds:
\begin{claim}
The row sum  of a Kronecker product matrix $\mathbf{A}\otimes\mathbf{B}$ can be rewritten as $\left( \sum_{i=1}^I \mathbf{A}(i,:) \right) \otimes \left( \sum_{j=1}^J \mathbf{B}(j,:)\right)$
\end{claim}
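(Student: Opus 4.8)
The plan is to prove the identity by combining the block structure of the Kronecker product with the bilinearity of $\otimes$. First I would fix dimensions: let $\mathbf{A}$ be $I\times K$ and $\mathbf{B}$ be $J\times L$, so that $\mathbf{A}\otimes\mathbf{B}$ is $(IJ)\times(KL)$. The central observation is that the rows of $\mathbf{A}\otimes\mathbf{B}$ are naturally indexed by pairs $(i,j)$, and the row corresponding to $(i,j)$ is precisely the Kronecker product of the row vectors $\mathbf{A}(i,:)\otimes\mathbf{B}(j,:)$. I would establish this directly from the definition, writing the entry of $\mathbf{A}\otimes\mathbf{B}$ in row $(i-1)J+j$ and column $(k-1)L+l$ as $\mathbf{A}(i,k)\,\mathbf{B}(j,l)$, and matching it to the $(k,l)$ entry of $\mathbf{A}(i,:)\otimes\mathbf{B}(j,:)$.

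With this in hand, the row sum (the sum of all rows, i.e. the row vector whose entries are the column sums $\mathbf{s}$) becomes the double sum $\sum_{i=1}^I\sum_{j=1}^J \mathbf{A}(i,:)\otimes\mathbf{B}(j,:)$. The next step is to invoke bilinearity of the Kronecker product: since $\otimes$ is linear in each argument, we have $\sum_i\sum_j \mathbf{u}_i\otimes\mathbf{v}_j = \left(\sum_i\mathbf{u}_i\right)\otimes\left(\sum_j\mathbf{v}_j\right)$, which collapses the double sum into $\left(\sum_{i=1}^I\mathbf{A}(i,:)\right)\otimes\left(\sum_{j=1}^J\mathbf{B}(j,:)\right)$, exactly the claimed form. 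Finally, since the application uses $\mathbf{W}=\mathbf{A}\otimes\mathbf{B}\otimes\mathbf{C}$, I would note that the two-factor result extends to any number of factors by associativity of $\otimes$ and a one-line induction, so that $\mathbf{s}$ factors as the Kronecker product of the three individual row sums.

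I do not expect a deep obstacle here; the argument is essentially bookkeeping plus bilinearity. The only point that needs care is confirming that the row indexed by $(i,j)$ really is $\mathbf{A}(i,:)\otimes\mathbf{B}(j,:)$ under the same index-ordering convention (row-major) used for $vec(\cdot)$ and \kronmatvec elsewhere in the paper; getting this convention consistent is what guarantees that the resulting $\mathbf{s}$ aligns entrywise with the normalization factor $\mathbf{s}(j)=\sum_i\mathbf{W}(i,j)$ in Eq.~\ref{eq:update_rule_naive}. Once the ordering is pinned down, the equality is immediate.
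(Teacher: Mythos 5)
Your proof is correct, but it takes a different route from the paper's. The paper's argument is a two-line algebraic computation: it writes each row sum as a product with an all-ones vector, $\sum_i \mathbf{A}(i,:) = \mathbf{i}_I^T\mathbf{A}$ and $\sum_j \mathbf{B}(j,:) = \mathbf{i}_J^T\mathbf{B}$, and then invokes the mixed-product property of the Kronecker product, $\left( \mathbf{i}_I^T\mathbf{A} \right) \otimes \left( \mathbf{i}_J^T\mathbf{B} \right) = \left(\mathbf{i}_I \otimes \mathbf{i}_J \right)^T \left( \mathbf{A\otimes B} \right) = \mathbf{i}_{IJ}^T\mathbf{W}$, which is exactly the sum of all rows of $\mathbf{W}$. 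You instead work from first principles: you identify each row of $\mathbf{A}\otimes\mathbf{B}$, indexed by the pair $(i,j)$, as the row vector $\mathbf{A}(i,:)\otimes\mathbf{B}(j,:)$, and then collapse the double sum $\sum_i\sum_j \mathbf{A}(i,:)\otimes\mathbf{B}(j,:)$ using bilinearity of $\otimes$. The two arguments are equivalent in content --- the mixed-product step in the paper is a compact encoding of your double-sum bookkeeping --- but they trade off differently: the paper's proof is shorter and leans on a standard identity, while yours is self-contained, makes the row-indexing convention explicit (which, as you note, is what guarantees $\mathbf{s}$ aligns entrywise with the normalization $\mathbf{s}(j)=\sum_i\mathbf{W}(i,j)$), and spells out the extension to three factors by associativity, which the paper uses immediately after the claim without comment.
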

\begin{proof}
We can rewrite the row sums $\sum_{i=1}^I \mathbf{A}(i,:) =\mathbf{i}_I^T\mathbf{A}$ and $\sum_{j=1}^J \mathbf{B}(j,:) =\mathbf{i}_J^T\mathbf{B}$ where $\mathbf{i}_I$ and $\mathbf{i}_J$ are all-ones column vectors of size $I$ and $J$ respectively. For the Kronecker product of the row sums and by using properties of the Kronecker product, and calling $\mathbf{A\otimes B} = \mathbf{W}$ we have
\[
 \left( \mathbf{i}_I^T\mathbf{A} \right) \otimes \left( \mathbf{i}_J^T\mathbf{B} \right) = \left(\mathbf{i}_I \otimes \mathbf{i}_J \right)^T \left( \mathbf{A\otimes B} \right) =  \mathbf{i}_{IJ}^T\mathbf{W} = \sum_{i=1}^{IJ}\mathbf{W}(i,:)
\]
which concludes the proof.
\end{proof}
Thus, $
\mathbf{s} = \left( \sum_i \mathbf{A}(i,:) \right)  \otimes \left( \sum_j \mathbf{B}(j,:) \right) \otimes \left(\sum_n \mathbf{C}(n,:) \right).
$

Putting everything together, we end up with Algorithm \ref{alg:efficient_kl_regression} which is an efficient solution to the minimization problem of Equation \ref{eq:kl_regression}. As in the naive case, we also use Iterative Majorization in the efficient algorithm; we iterate updating $\mathbf{x}^{(k)}$ until we converge to a local optimum. Finally, Algorithm \ref{alg:efficient_kl} shows the steps to compute CORCONDIA under KL-Divergence efficiently.
\begin{center}
\begin{algorithm} [!htp]
\small
\caption{Efficient Quality Assesment with KL-Divergence loss} \label{alg:efficient_kl}
\begin{algorithmic}[1]
\REQUIRE Tensor $\tensor{X}$ and \cpapr factor matrices $\mathbf{A,B,C}$.
\ENSURE CORCONDIA diagnostic $c$.
\STATE some more stuff
\STATE some stuff here
\STATE $\displaystyle{ c = 100\left(1 - \frac{\sum_{i=1}^{F} \sum_{j=1}^{F} \sum_{k=1}^{F} \left(\tensor{G}(i,j,k) - \tensor{I}(i,j,k)\right)^2 }{F}  \right)}$
\end{algorithmic}
\end{algorithm}
\end{center}
\begin{center}
\begin{algorithm} [!htp]
\small
\caption{Efficient Majorization Minimization for KL-Divergence Regression} \label{alg:efficient_kl_regression}
\begin{algorithmic}[1]
\REQUIRE Vector $\mathbf{y}$ and matrices $\mathbf{A,B,C}$.
\ENSURE Vector $\mathbf{x}$
\STATE Initialize $\mathbf{x}^{(0)}$ randomly
\STATE $\tilde{\mathbf{y}} = \kronmatvec(\{ \mathbf{A, B, C} \},\mathbf{x}^{(0)})$
\STATE $\mathbf{s} = \left( \sum_i \mathbf{A}(i,:) \right)  \otimes \left( \sum_j \mathbf{B}(j,:) \right) \otimes \left(\sum_n \mathbf{C}(n,:) \right)$
\STATE Start loop:
\STATE $\mathbf{z}_1 = \mathbf{y}\oslash \tilde{\mathbf{y}}$
\STATE $\mathbf{z}_2 = \kronmatvec( \{ \mathbf{A}^T, \mathbf{B}^T, \mathbf{C}^T  \},\mathbf{z}_1)$
\STATE $\mathbf{x}^{(k)} = \mathbf{x}^{(k-1)} \ast \mathbf{z}_2$
\STATE $\tilde{\mathbf{y}} = \kronmatvec(\{ \mathbf{A, B, C} \},\mathbf{x}^{(k)})$
\STATE End loop
\STATE Normalize $\mathbf{x}^{(k)} $ using $\mathbf{s}$
\end{algorithmic}
\end{algorithm}
\end{center}

\subsection{\methodplain: Automated Unsupervised Tensor Mining}
\label{sec:method_final}

At this stage, we have the tools we need in order to design an automated tensor mining algorithm that minimizes human intervention and provides quality characterization of the solution. We call our proposed method \method, and we view this as a step towards making tensor mining a fully automated tool, used as a black box by academic and industrial practicioners.

\method is a two step algorithm, where we first search through the solution space and at the second step, we automatically select a good solution based on its quality and the number of components it offers.
A sketch of \method follows, and is also outlined in Algorithm \ref{alg:auto_tensor}.
\paragraph{Solution Search}
The user provides a data tensor, as well as a maximum rank that reflects the budget that she is willing to devote to \method's search. We neither have nor require any prior knowledge whether the tensor is highly sparse, or dense, contains real values or counts, hinting whether we should use, say, \cpnmu postulating Frobenius norm loss, or \cpapr postulating KL-Divergence loss.

Fortunately, our work in this paper, as well as our previous work \cite{papalexakis2015fastcorcondia} has equipped us with tools for handling all of the above cases. Thus, we follow a data-driven approach, where we let the data show us whether using \cpnmu or \cpapr is capturing better structure.
For a grid of values for the decomposition rank (bounded by the user provided maximum rank), we run both \cpnmu and \cpapr, and we record the quality of the result as measured by the CORCONDIA diagnostic into vectors $\mathbf{c}_{Fro}$ and $\mathbf{c}_{KL}$ (using the algorithm in \cite{papalexakis2015fastcorcondia} and Algorithm \ref{alg:efficient_kl} respectively), truncating negative values to zero.

\paragraph{Result Selection}
At this point, for both \cpnmu and \cpapr we have points in two dimensional space $(F_i,c_i)$, reflecting the quality and the corresponding number of components. 
Informally, our problem here is the following:
\begin{infproblem}
Given points $(F_i,c_i)$ we need to find one that maximizes the quality of the decomposition, as well as finding as many hidden components in the data as possible. 
\end{infproblem}
Intuitively, we are seeking a decomposition that discovers as many latent components as possible, without sacrificing the quality of those components. 
Essentially, we have a multi-objective optimization problem, where we need to maximize both $c_i$ and $F_i$. However, if we, say, get the Pareto front of those points (i.e. the subset of all non-dominated points), we end up with a family of solutions without a clear guideline on how to select one. We propose to use the following, effective, two-step maximization algorithm that gives an intuitive {\em data-driven} solution:
\begin{itemize}
	\item {\bf Max c step}: Given vector $\mathbf{c}$, run 2-means clustering on its values. This will essentially divide the vector into a set of good/high values and a set of low/bad ones. If we call $m_1 , m_2$ the means of the two clusters, then we select the cluster index that corresponds to the maximum between $m_1$ and $m_2$.
	\item {\bf Max F step}: Given the cluster of points with maximum mean, we select the point that maximizes the value of $F$. We call this point $(F^*,c^*)$.
\end{itemize}
Another alternative is to formally define a function of $c,F$ that we wish to maximize, and select the maximum via enumeration. Coming up with the particular function to maximize, considering the intuitive objective of maximizing the number of components that we can extract with reasonably high quality ($c$), is a hard problem, and we risk biasing the selection with a specific choice of a function. Nevertheless, an example such function can be $g(c,F) = logc logF$ for $c>0$, and $g(0,F) = 0$; this function essentially measures the area of the rectangle formed by the lines connecting $(F,c)$ with the axes (in the log-log space) and intuitively seeks to find a good compromise between maximizing $F$ and $c$. This function performs closely to the proposed data-driven approach and we defer a detailed discussion and investigation to future work.

\hide{
Figure \ref{fig:pareto_heuristic} shows pictorially the output of this two-step algorithm for a set of points taken from a real dataset.

\begin{figure}[!ht]
	\begin{center}
		\includegraphics[width = 0.4\textwidth]{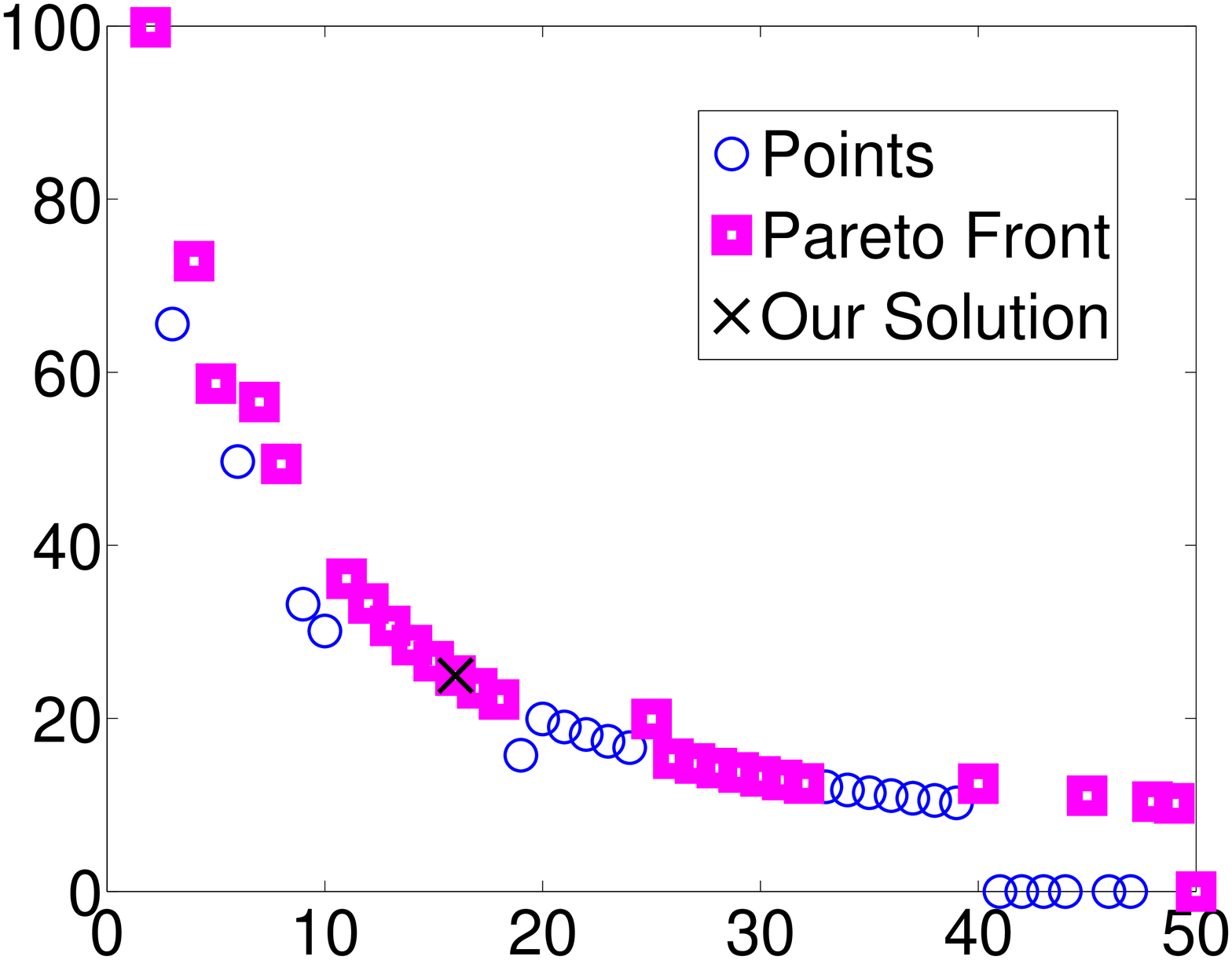}
	\end{center}
	\caption{Example of choosing a good point}
	\label{fig:pareto_heuristic}
\end{figure}
}
After choosing the ``best'' points $(F_{Fro}^*,c_{Fro}^*)$ and $(F_{KL}^*,c_{KL}^*)$, at the final step of \method, we have to select between the results of \cpnmu and \cpapr. In order do so, we can use the following strategies:
 \begin{enumerate}
 	\item  Calculate $s_{Fro} = \displaystyle{\sum_f \mathbf{c}_{Fro}(f)}$ and $s_{KL} = \displaystyle{\sum_f \mathbf{c}_{KL}(f)}$, and select the method that gives the largest sum. The intuition behind this data-driven strategy is choosing the loss function that is able to discover results with higher quality on aggregate, for more potential ranks.
 	\item Select the results that produce the maximum value between $c_{Fro}^*$ and $c_{KL}^*$. This strategy is conservative and aims for the highest quality of results, possibly to the expense of components of lesser quality that could still be acceptable for exploratory analysis.
 	\item Select the results that produce the maximum value between $F_{Fro}^*$ and $F_{KL}^*$. Contrary to the previous strategy, this one is more aggressive, aiming for the highest number of components that can be extracted with acceptable quality.
 \end{enumerate}
Empirically, the last strategy seems to give better results, however they all perform very closely in synthetic data. Particular choice of strategy depends on the application needs, e.g. if quality of the components is imperative to be high, then strategy 2 should be preferred over strategy 3.

\begin{center}
\begin{algorithm} [!htp]
\small
\caption{\method: Automatic Unsupervised Tensor Mining} \label{alg:auto_tensor}
\begin{algorithmic}[1]
\REQUIRE Tensor $\tensor{X}$ and maximum budget for component search $F$
\ENSURE PARAFAC decomposition $\mathbf{A,B,C}$ of $\tensor{X}$ and corresponding quality metric $c^*$.
\FOR{$f = 2\cdots F$}
\STATE Run \cpnmu for $f$ components. Update $\mathbf{c}_{Fro}(f)$ with the result of Algorithm in \cite{papalexakis2015fastcorcondia}.
\STATE Run \cpapr for $f$ components. Update $\mathbf{c}_{KL}(f)$ with the result of Algorithm \ref{alg:efficient_kl}.
\ENDFOR
\STATE Find $(F_{Fro}^*,c_{Fro}^*)$ and $(F_{KL}^*,c_{KL}^*)$ using the two-step maximization as described in the text.
\STATE Choose between \cpnmu and \cpapr using one of the three strategies described in the text.
\STATE Output the chosen $c^*$ and the corresponding decomposition.
\end{algorithmic}
\end{algorithm}
\end{center}
We point out that lines 1-5 of Algorithm \ref{alg:auto_tensor} are embarrassingly parallel.
Finally, it is important to note that \method not only seeks to find a good number of components for the decomposition, combining the best of both worlds of \cpnmu and \cpapr, but furthermore is able to provide quality assessment for the decomposition: if for a given $F_{max}$ none of the solutions that \method sifts through yields a satisfactory result, the user will be able to tell because of the very low (or zero in extreme cases) $c^*$ value. 
\section{Experimental Evaluation}
\label{sec:experiments}
We implemented \method in Matlab, using the Tensor Toolbox \cite{tensortoolbox}, which provides efficient manipulation and storage of sparse tensors. We make our code publicly available\footnote{Download our code at \\ \codeurl}. The online version of our code contains a test case that uses the same code that we used for the following evaluation.
All experiments were run on a workstation with 4 Intel(R) Xeon(R) E7- 8837 and 1TB of RAM.

\subsection{Evaluation on Synthetic Data}
\label{sec:synthetic}
In this section, we empirically measure \method's ability to uncover the true number of components hidden in a tensor. The experimental setting is as follows: We create synthetic tensors of size $50\times50\times50$, using the function \texttt{create\_problem} of the Tensor Toolbox for Matlab as a standardized means of generating synthetic tensors, we create two different test cases: 1) sparse factors, with total number of non-zeros equal to 500, and 2) dense factors.
In both cases, we generate random factors with integer values.
We generate these three test cases for true rank $F_o$ ranging from 2-5. For both test cases, we distinguish a noisy case (where Gaussian noise with variance $0.1$ is by default added by \texttt{create\_problem}) and a noiseless case.

We compare \method against three baselines:
\begin{itemize}\setlength{\itemsep}{0.0mm}
	\item {\bf Baseline 1}: A Bayesian tensor decomposition approach, as introduced very recently in \cite{zhaobayesian} which automatically determines the rank.
	\item {\bf Baseline 2}: This is a very simple heuristic approach where, for a grid of values for the rank, we run \cpnmu and record the Frobenius norm loss for each solution. If for two consecutive iterations the loss does not improve more than a small positive number $\epsilon$ (set to $10^{-6}$ here), we declare as output the result of the previous iteration.
	\item {\bf Baseline 3}: Same as Baseline 2 with sole difference being that we use \cpapr and accordingly instead of the Frobenius norm reconstruction error, we measure the log-likelihood, and we stop when it stops improving more than $\epsilon$. We expect Baseline 3 to be more effective than Baseline 2 in sparse data, due to the more delicate and effective treatment of sparse, count data by \cpapr.
\end{itemize}

\method as well as Baselines 2 \& 3 require a maximum bound $F_{max}$ on the rank; for fairness, we set $F_{max}=2F_o$ for all three methods.
In Figures \ref{fig:synthetic_errors} and \ref{fig:synthetic_errors_noiseless} we show the results for both test cases, for noisy and noiseless data respectively. The error is measured as $|F_{est} - F_o|$ where $F_{est}$ is the estimated number of components by each method. Due to the randomized nature of the synthetic data generation, we ran 1000 iterations and we show the average results. 
In the noisy case (Fig. \ref{fig:synthetic_errors}) we observe that in both scenarios and for all chosen ranks, \method outperforms the baselines, having lower error. 
In the noiseless case of Fig. \ref{fig:synthetic_errors_noiseless}, we observe consistent behavior, with all methods experiencing a small boost to their performance, due to the absence of noise.
We calculated statistical significance of our results ($p<0.01$) using a two-sided sign test.

Overall, we conclude that \method largely outperforms the baselines. The problem at hand is an extremely hard one, and we are not expecting any {\em tractable} method to solve it perfectly. Thus, the results we obtain here are very encouraging.

\begin{figure}[!htf]
\begin{center}
	\subfigure[Sparse]{ \includegraphics[width = 0.23\textwidth]{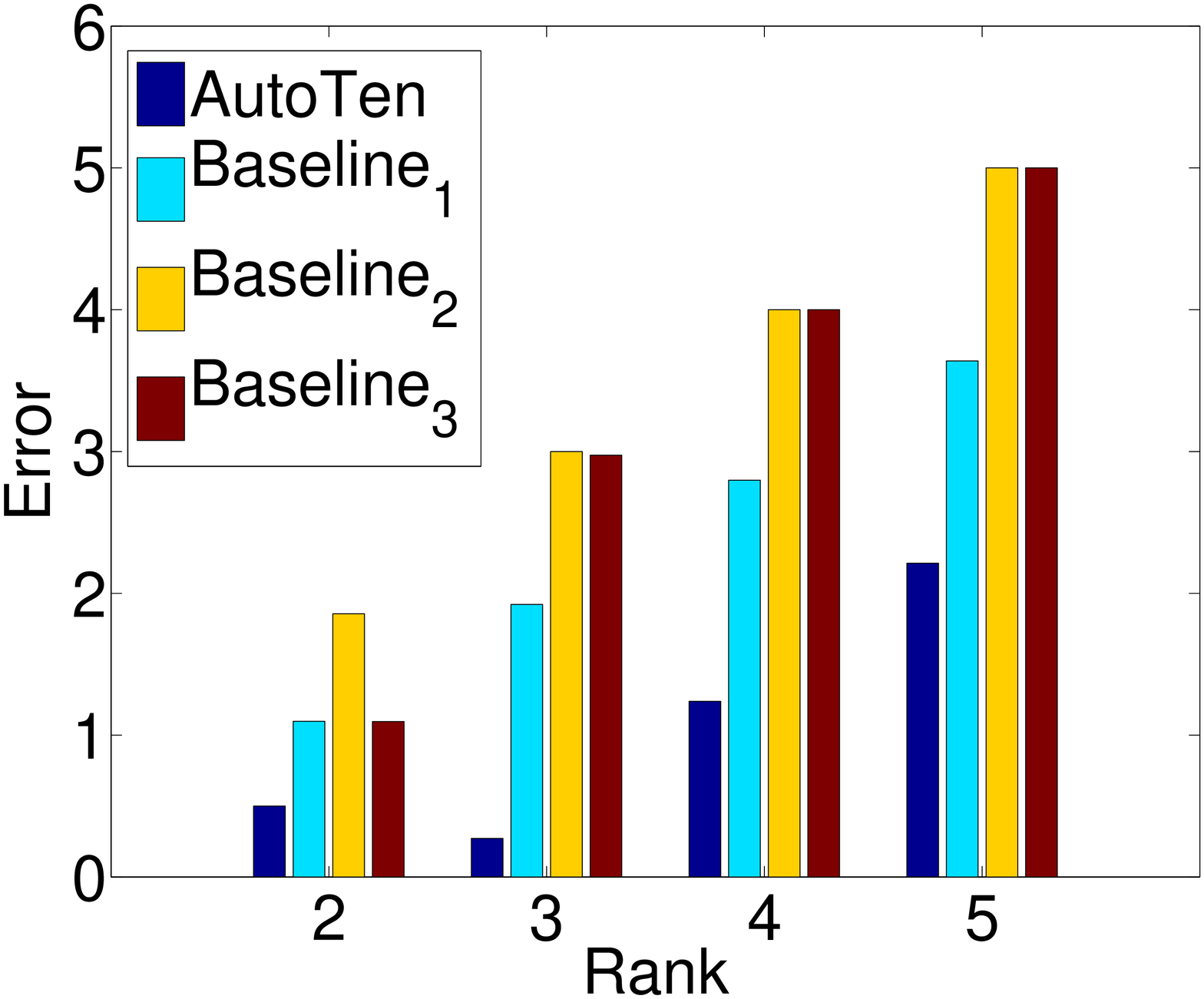}}
	\subfigure[Dense]{ \includegraphics[width = 0.23\textwidth]{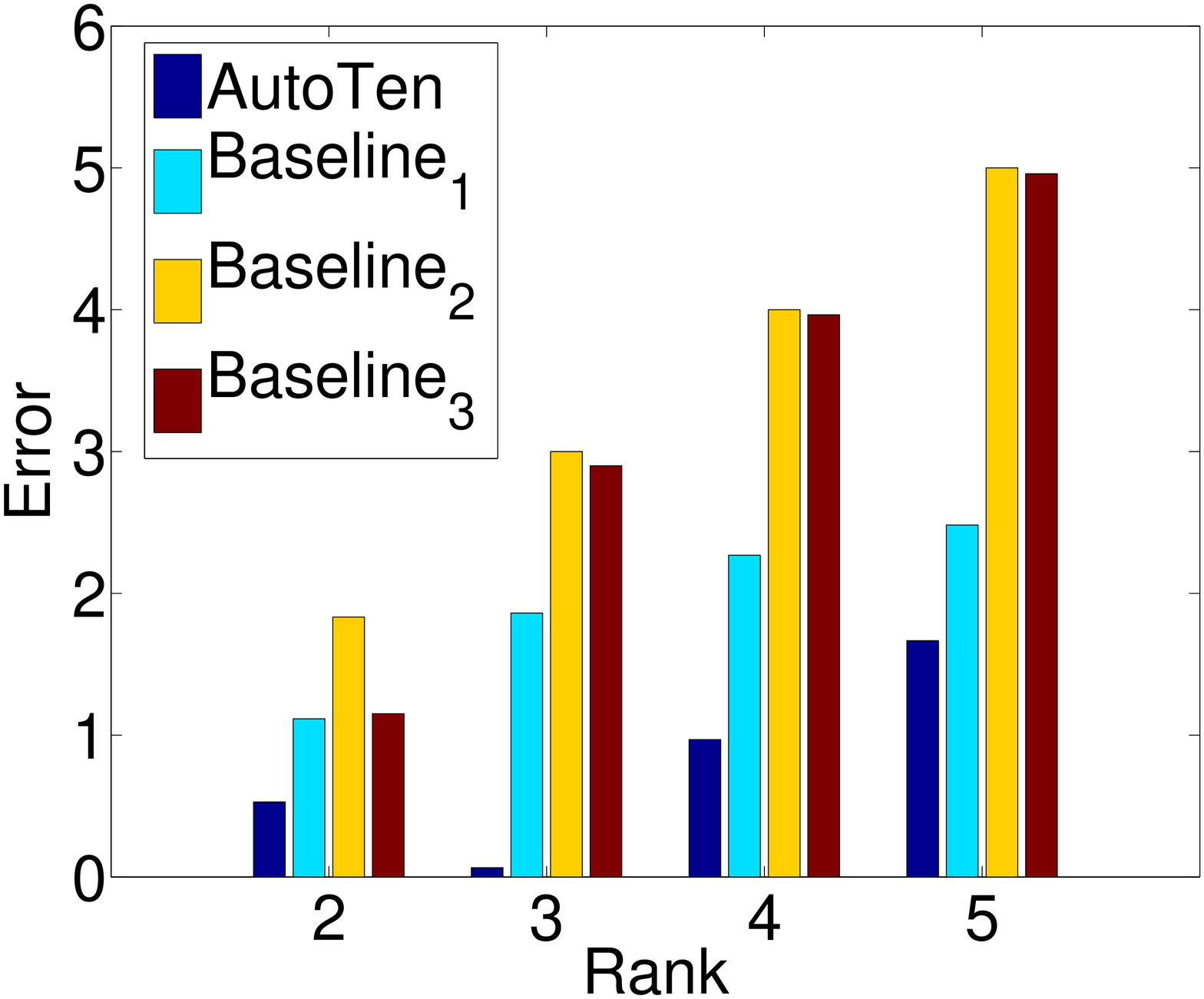}}
	\caption{Error for \method and the baselines, for noisy synthetic data.}
	\label{fig:synthetic_errors}
\end{center}
\end{figure}
\begin{figure}[!htf]
\begin{center}
	\subfigure[Sparse]{ \includegraphics[width = 0.23\textwidth]{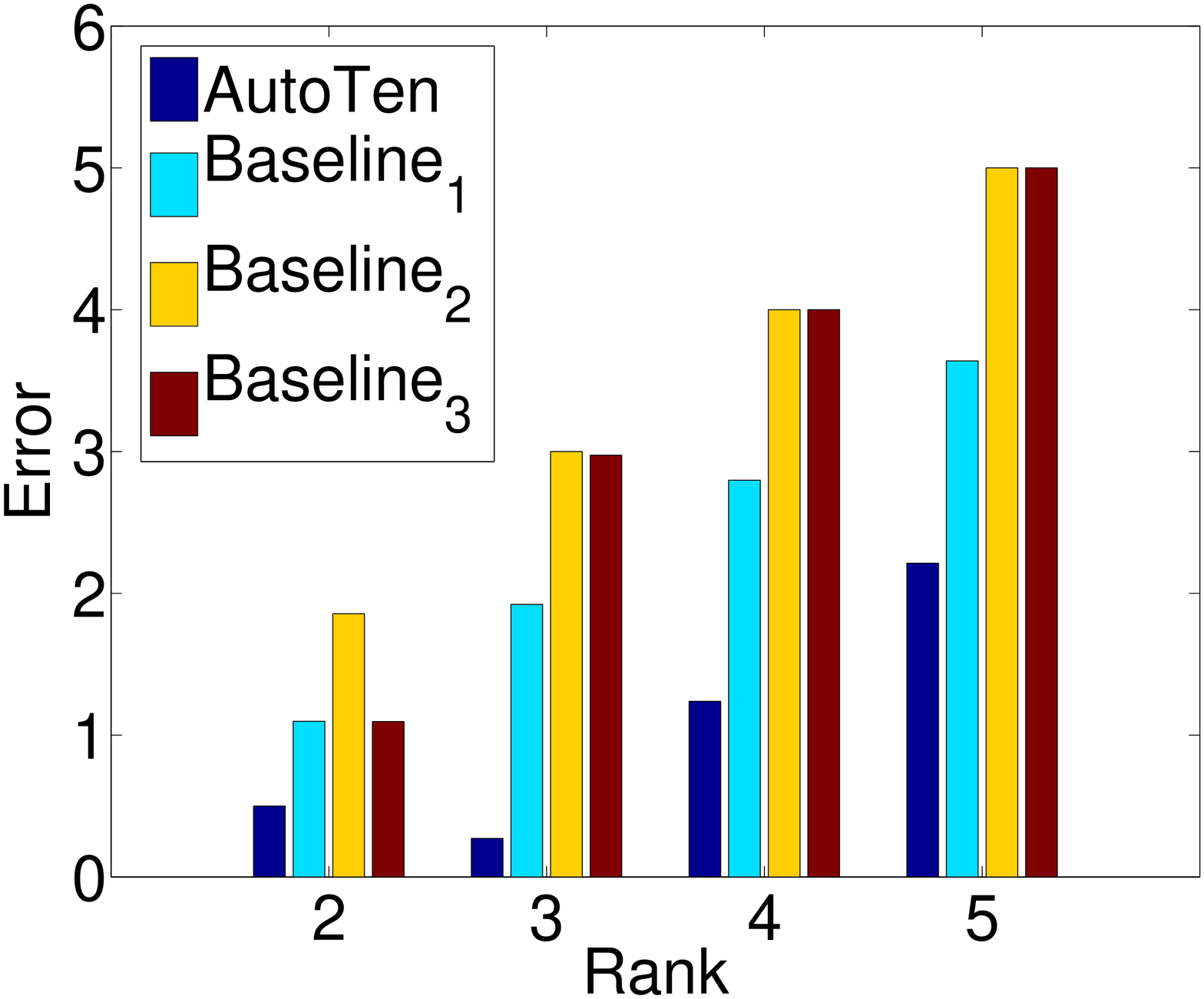}}
	\subfigure[Dense]{ \includegraphics[width = 0.23\textwidth]{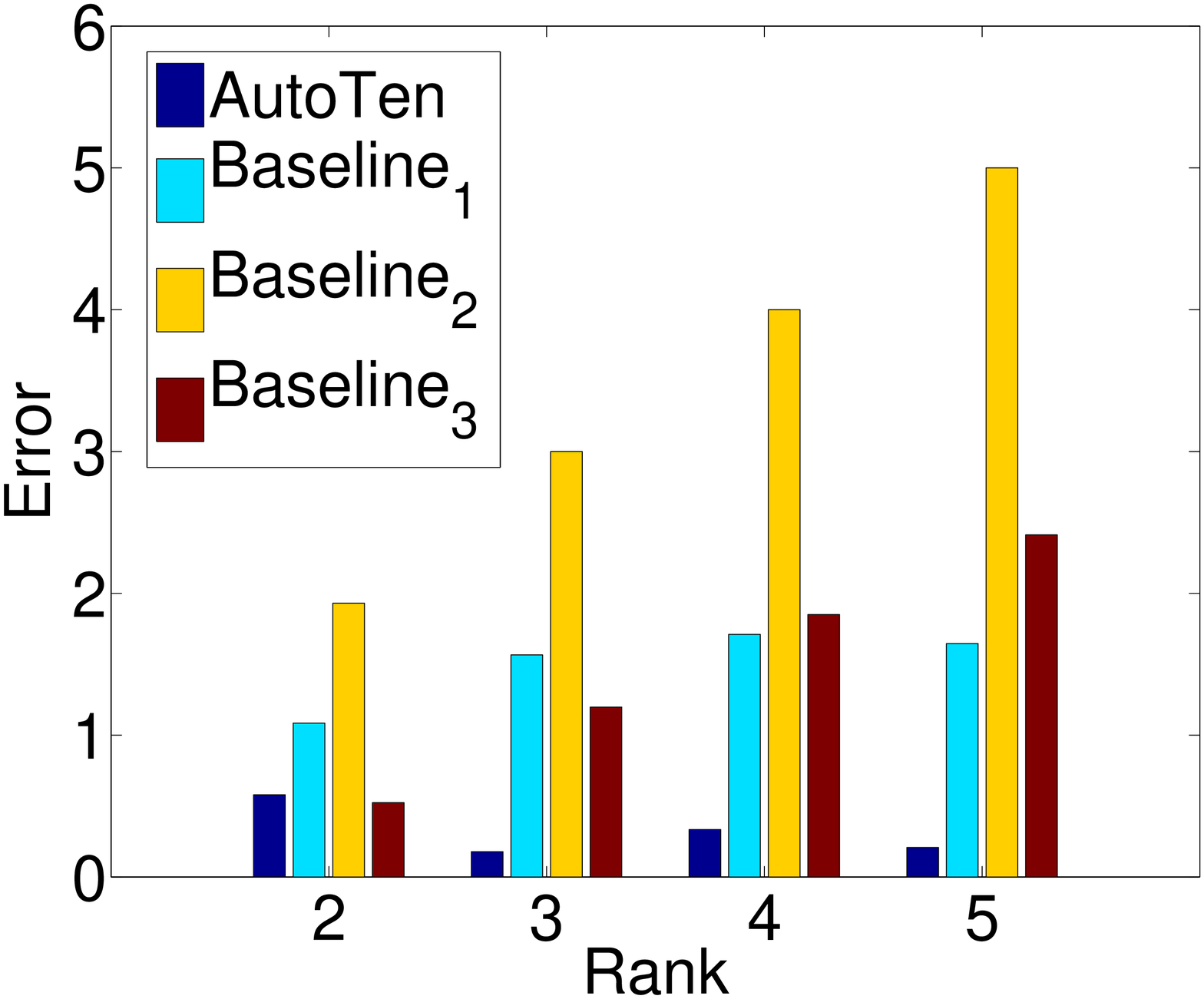}}
	\caption{Error for \method and the baselines, for noiseless synthetic data.}
	\label{fig:synthetic_errors_noiseless}
\end{center}
\end{figure}


\section{Data Mining Case Study}
\label{fig:case_study}
After establishing that \method is able to perform well in a control, synthetically generated setting, the next step is to see its results ``in the wild''. To that end, we are conducting two case studies. Section \ref{sec:study1} takes 10 diverse real datasets shown in Table \ref{tab:datasets} and investigates their rank structure. In Section \ref{sec:study2} we apply \method to two of the datasets of Table \ref{tab:datasets} and we analyze the results, as part of an exploratory data mining study.
 \begin{table*}[!ht]
 \small
 \begin{center}
    \caption{Datasets analyzed \label{tab:datasets}}
{ 
  \rowcolors{1}{lightgray}{white}
  \begin{tabular}{llllll}
    & \textbf{Name} & \textbf{Description} & \textbf{Dimensions} & {\bf Number of nonzeros}   \\
    & \enron & (sender, recipient, month) & $186\times186\times44$ & 9838 \\ \hiderowcolors
    & \reality\cite{eagle2009inferring} & (person, person, means of communication)  &$88\times88 \times 4$     & 5022 \\
	& \facebook \cite{viswanath-2009-activity} & (wall owner, poster, day) &  $63891\times  63890   \times 1847$   & 737778 \\
	& \taxi\cite{yuan2011driving,Wang2014travel}  &  (latitude, longitude,minute) &   $100\times100\times 9617$  & 17762489 \\
	& \dblp \cite{papalexakis2013more} &  (paper, paper, view)&$7317\times7317\times3$     &  274106\\
	& \netflix & (movie, user, date) &   $17770\times252474\times88$  &  50244707 \\
	& \amazonco\cite{snapnets} &  (product, product, product group)& $256\times256\times5$    & 5726 \\
	& \amazonmeta\cite{snapnets} &  (product, customer, product group) &   $10000\times263011\times5$  &  441301\\
	& \yelp &  (user, business, term) &$43872\times11536\times10000$  &  10009860\\
	& \airport &  (airport, airport, airline) &  $9135\times9135\times19305$    & 58443 \\
  \end{tabular}
  }
 \end{center}
\end{table*}

\subsection{Rank Structure of Real Datasets}
\label{sec:study1}
Since exploration of the rank structure of a dataset, using the CORCONDIA diagnostic, is an integral part of \method, we deem necessary to dive deeper into that process. 
In this case study we are analyzing the rank structure of 10 real datasets, as captured by CORCONDIA with Frobenius norm loss (using our algorithm from \cite{papalexakis2015fastcorcondia}, as well as CORCONDIA with KL-Divergence loss (introduced here).
Most of the datasets we use are publicly available and can be obtained by either following the link within the original work that introduced them, or (whenever applicable) a direct link.
\enron\footnote{\url{http://www.cs.cmu.edu/~enron/}} is a social network dataset, recording the number of emails exchanged between employees of the company for a period of time, during the company crisis. \reality \cite{eagle2009inferring} is a multi-view social network dataset, recording relations between MIT students (who calls whom, who messages whom, who is close to whom and so on). \facebook \cite{viswanath-2009-activity} is a time evolving snapshot of Facebook, recording people posting on other peoples' Walls. \taxi\footnote{\small\url{http://research.microsoft.com/apps/pubs/?id=152883}} is a dataset of taxi trajectories in Beijing; we discretize latitude and longitude to a $100\times 100$ grid. \dblp is a dataset recording which researched published what paper under three different views (first view shows co-authorship, second view shows citation, and third view shows whether two authors share at least three keywords in their title or abstract of their papers). \netflix comes from the Netflix prize dataset and records movie ratings by users over time.  \amazonco data records items bought together, and the category of the first of the two products. \amazonmeta records customers who reviewed a product, and the corresponding product category. \yelp contains reviews of Yelp users for various businesses (from the data challenge\footnote{\url{https://www.yelp.com/dataset_challenge/dataset}}). Finally, \airport\footnote{\url{http://openflights.org/data.html}} contains records of flights between different airports, and the operating airline.

\begin{figure*}[!ht]
	\begin{center}
		\subfigure[]{ \includegraphics[width = \subfigurewidth]{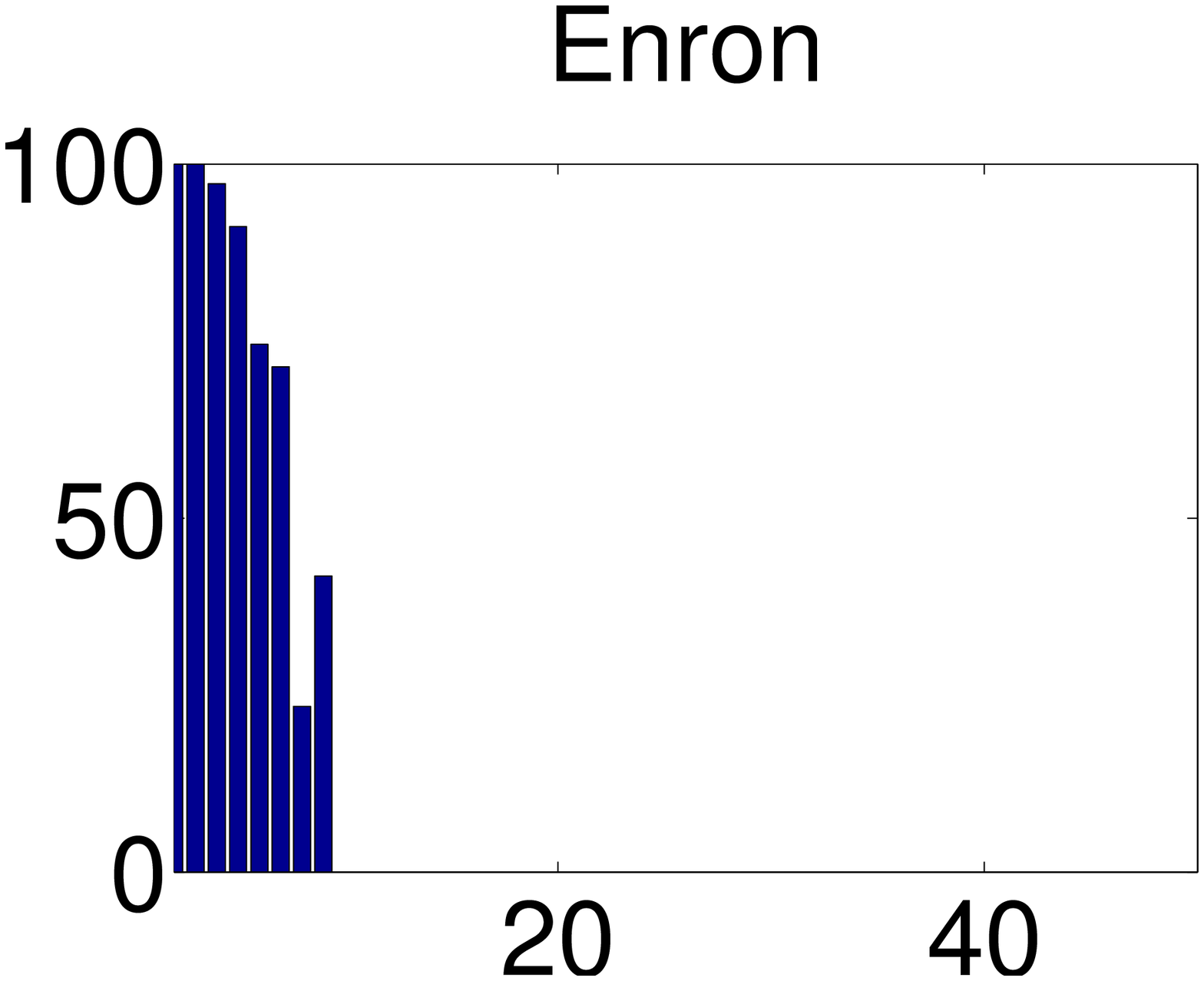} }
		\subfigure[]{ \includegraphics[width = \subfigurewidth]{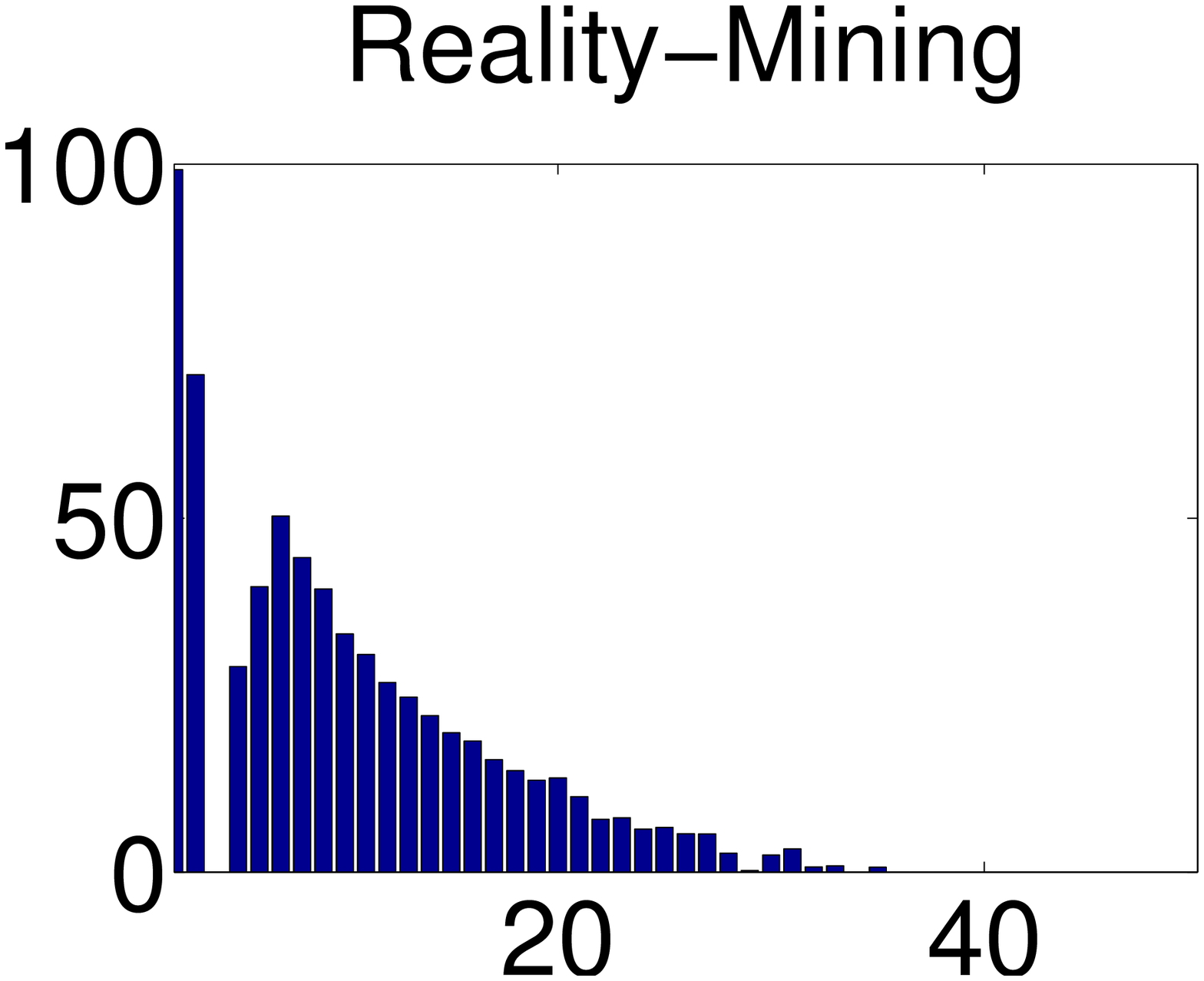} }
		\subfigure[]{ \includegraphics[width = \subfigurewidth]{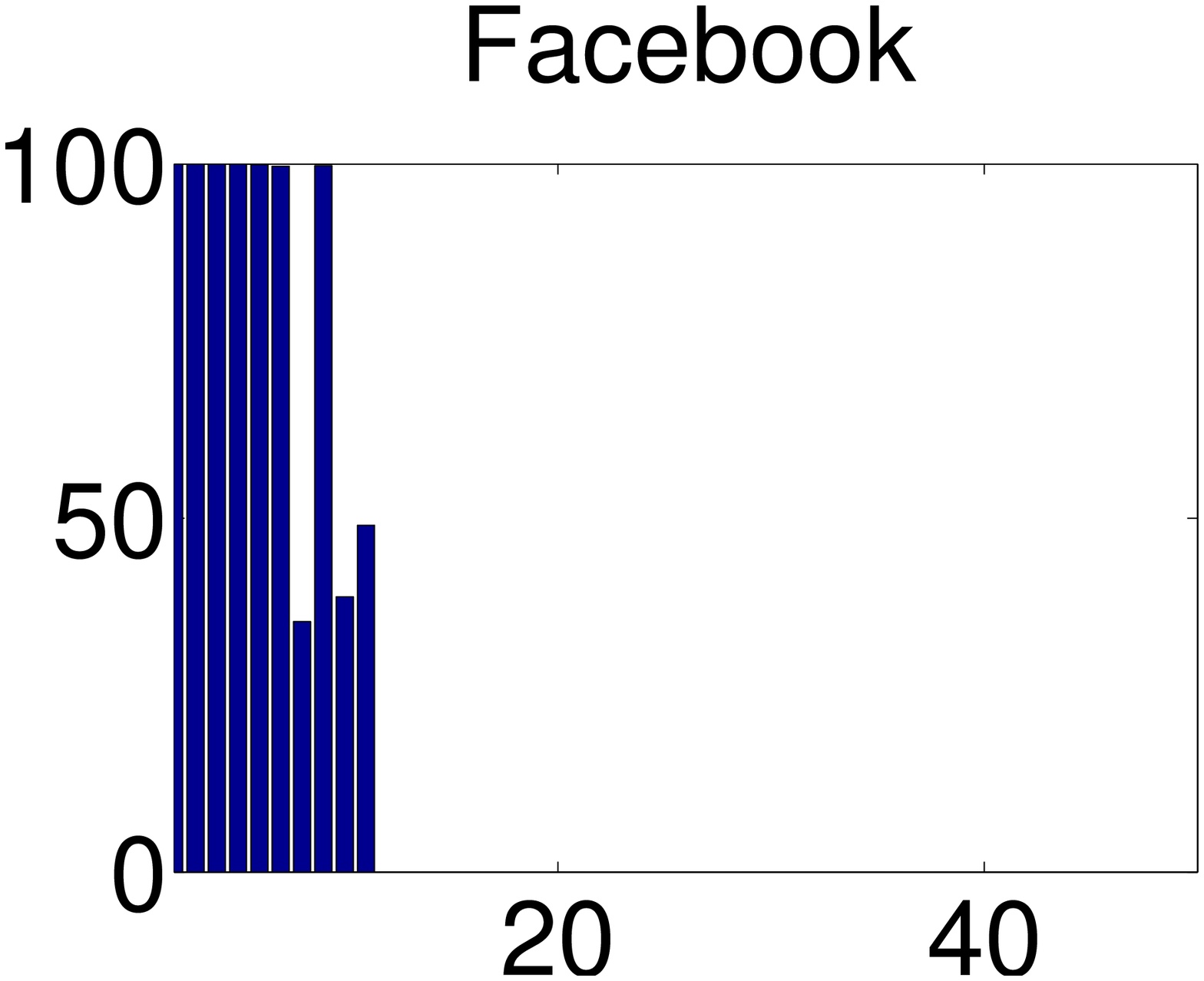} }
		\subfigure[]{ \includegraphics[width = \subfigurewidth]{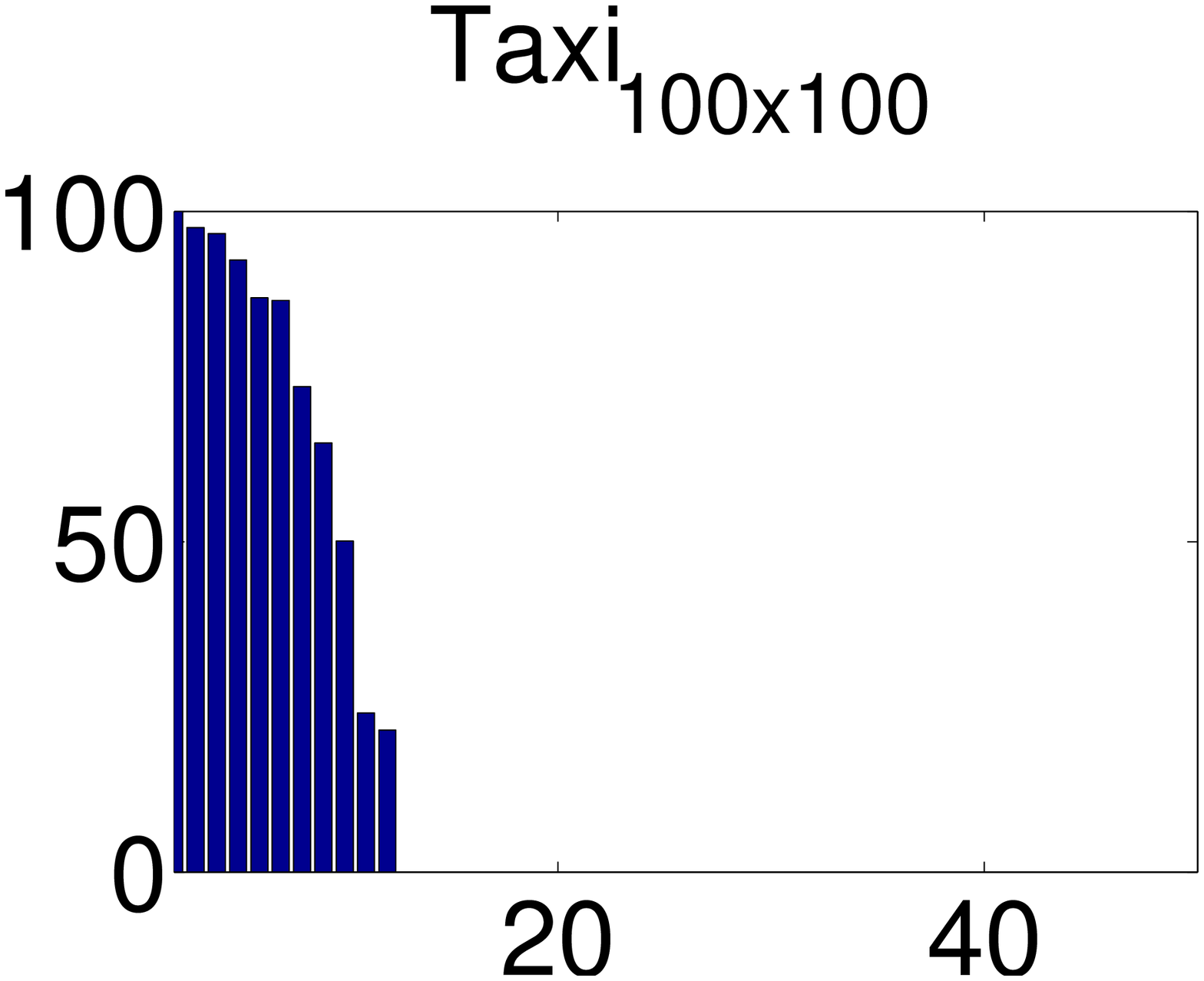} }
		\subfigure[]{ \includegraphics[width = \subfigurewidth]{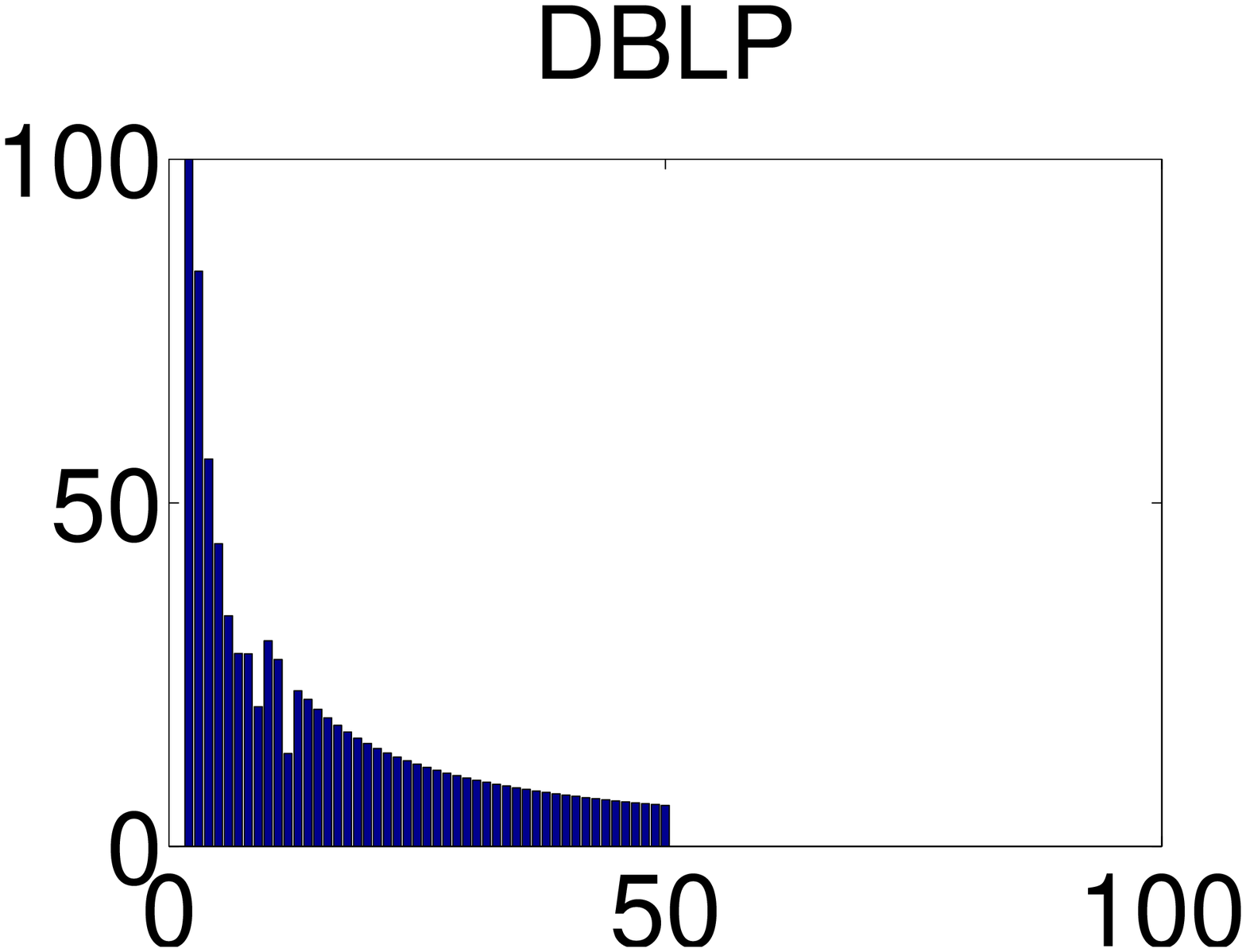} }
		\subfigure[]{ \includegraphics[width = \subfigurewidth]{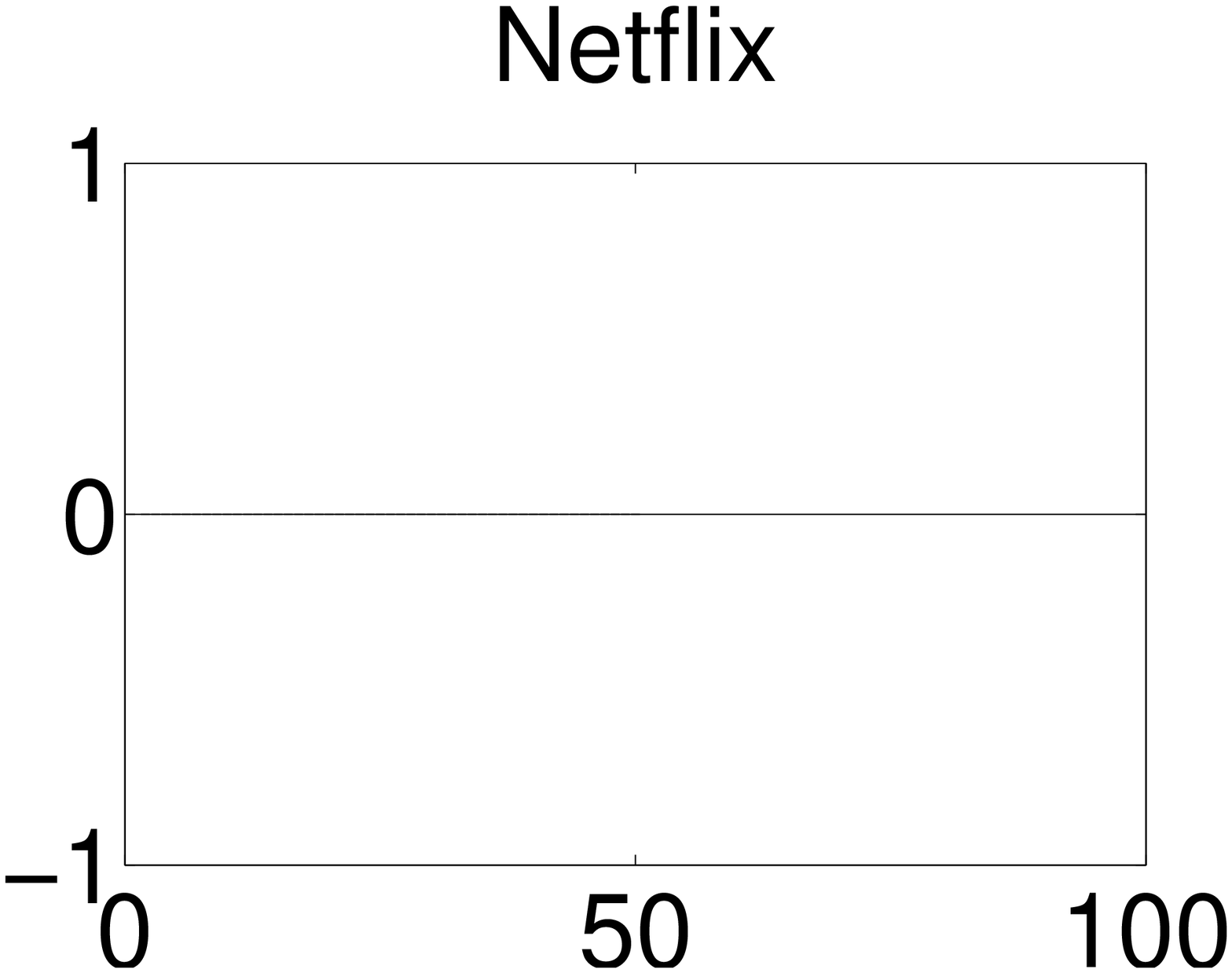} }
		\subfigure[]{ \includegraphics[width = \subfigurewidth]{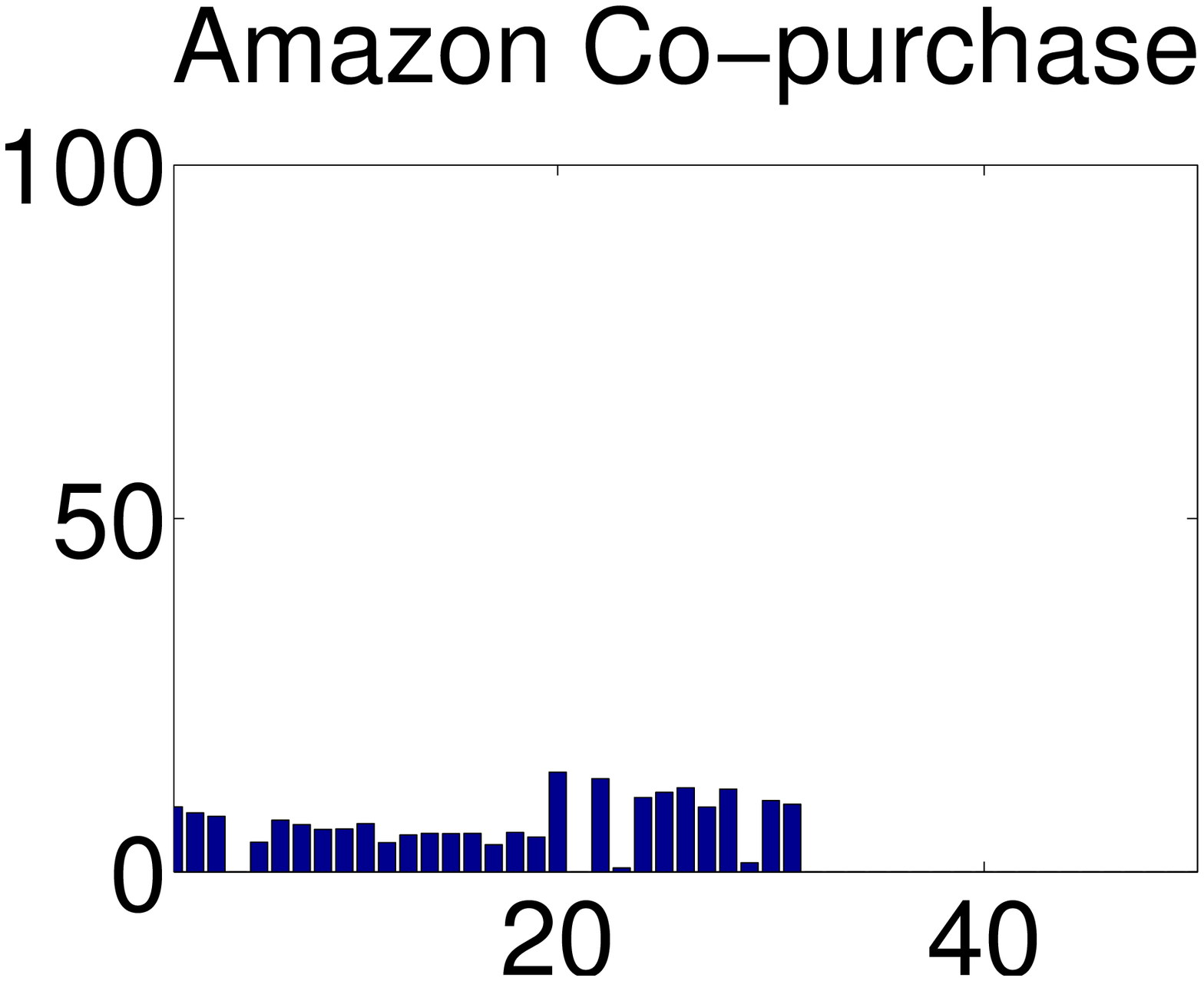} }
		\subfigure[]{ \includegraphics[width = \subfigurewidth]{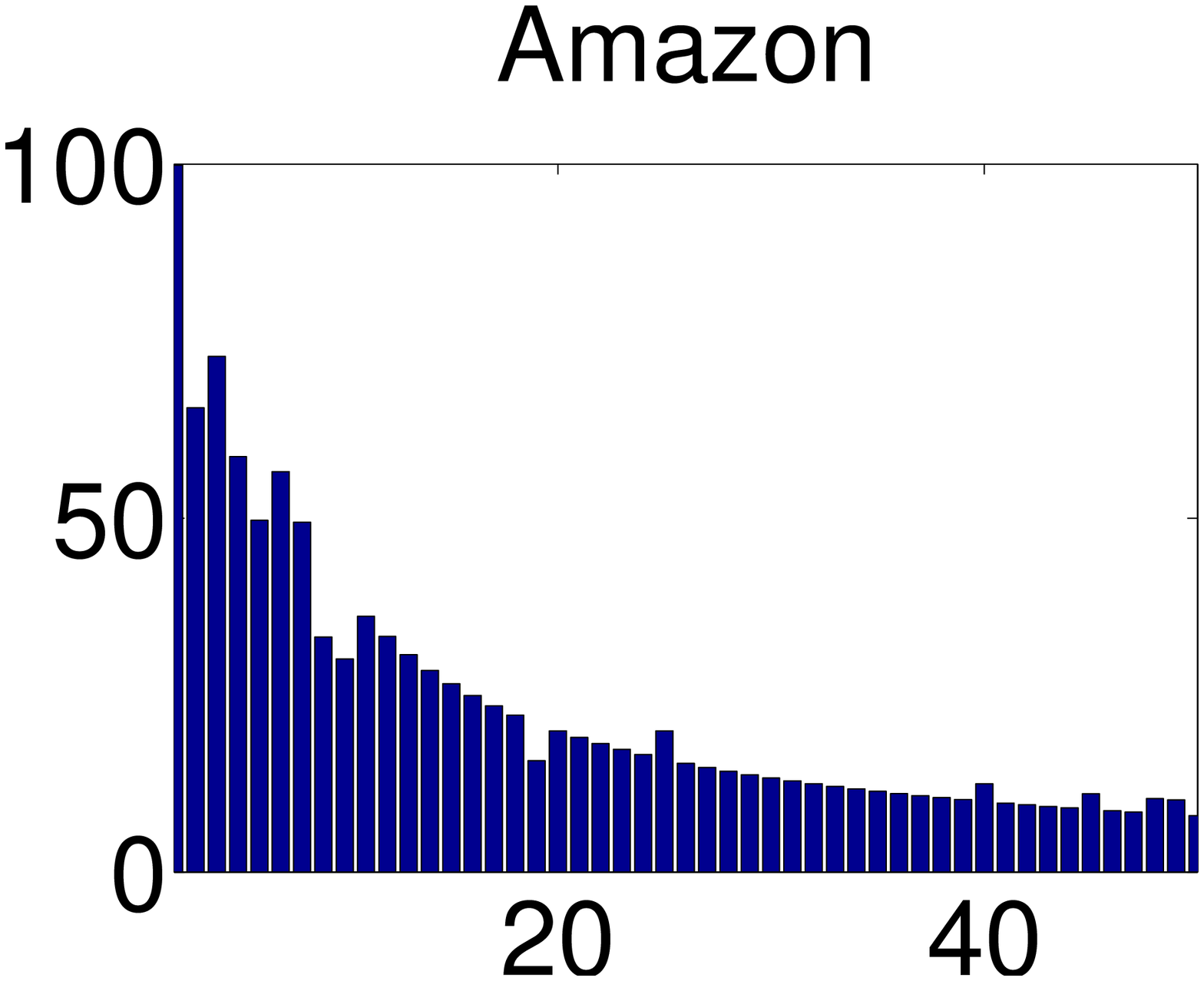} }
		\subfigure[]{ \includegraphics[width = \subfigurewidth]{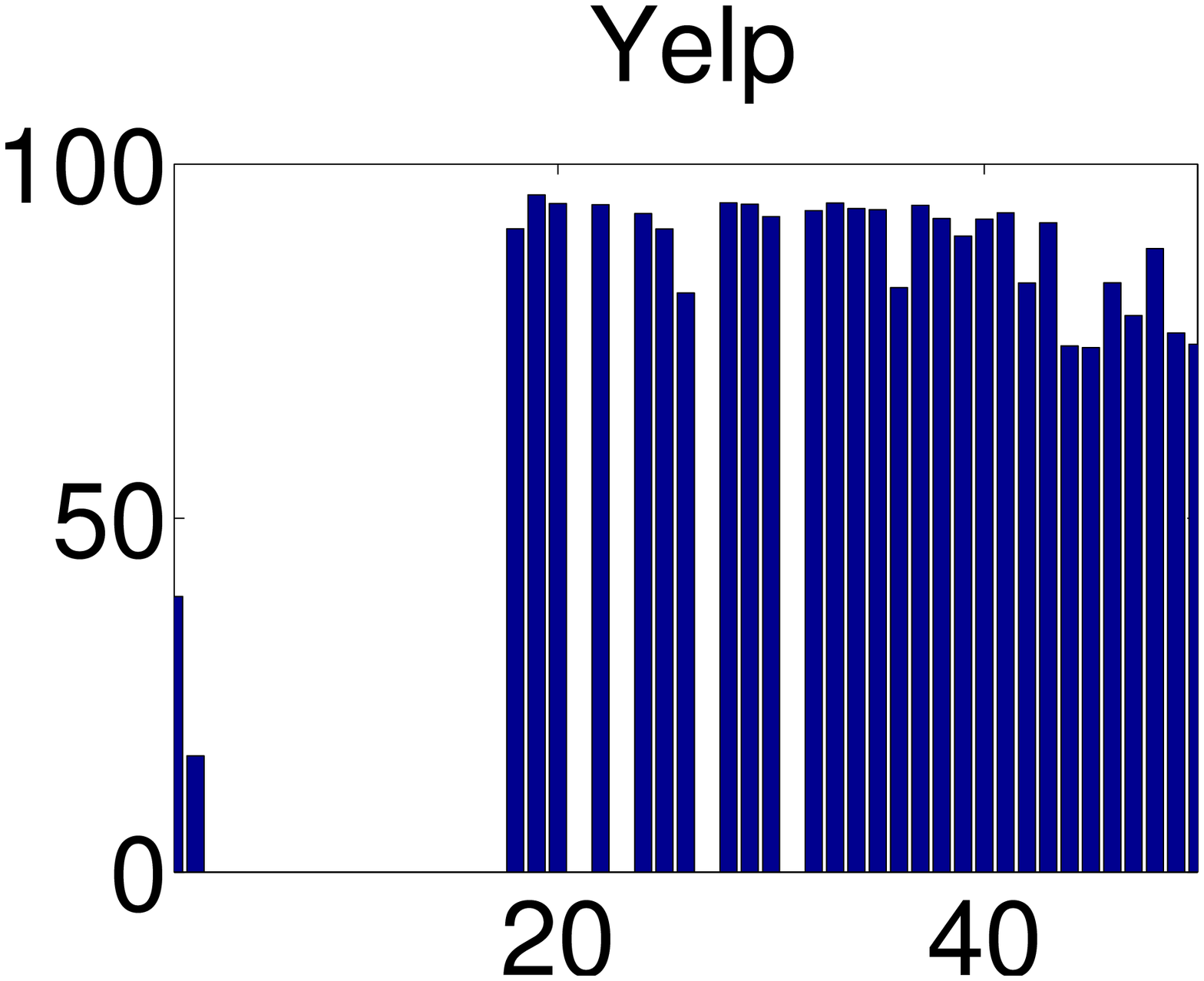} }
		\subfigure[]{ \includegraphics[width = \subfigurewidth]{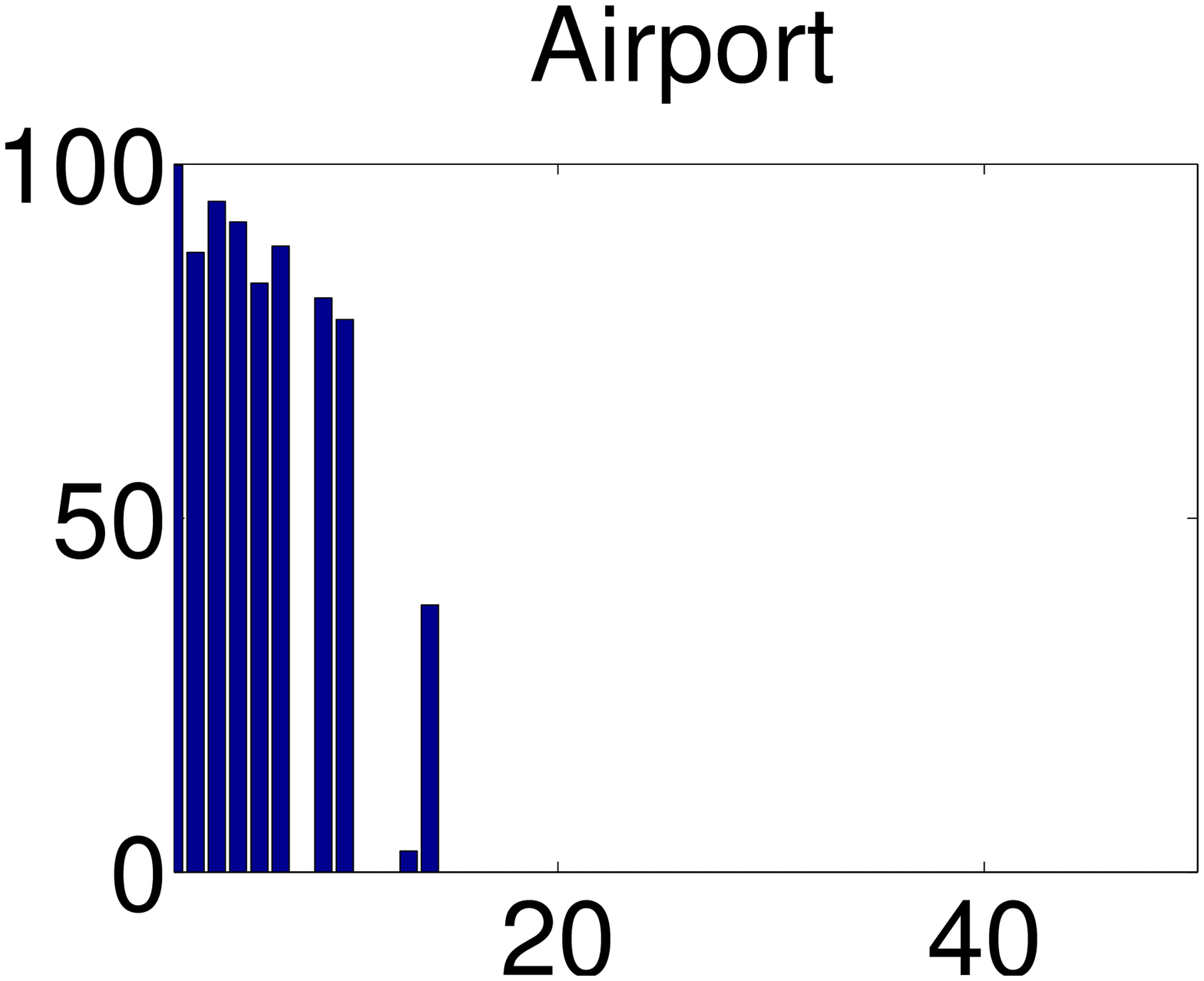} }

	\end{center}
	\caption{CORCONDIA for \cpnmu}
	\label{fig:real_fro}
\end{figure*}

\begin{figure*}[!ht]
	\begin{center}
		\subfigure[]{ \includegraphics[width = \subfigurewidth]{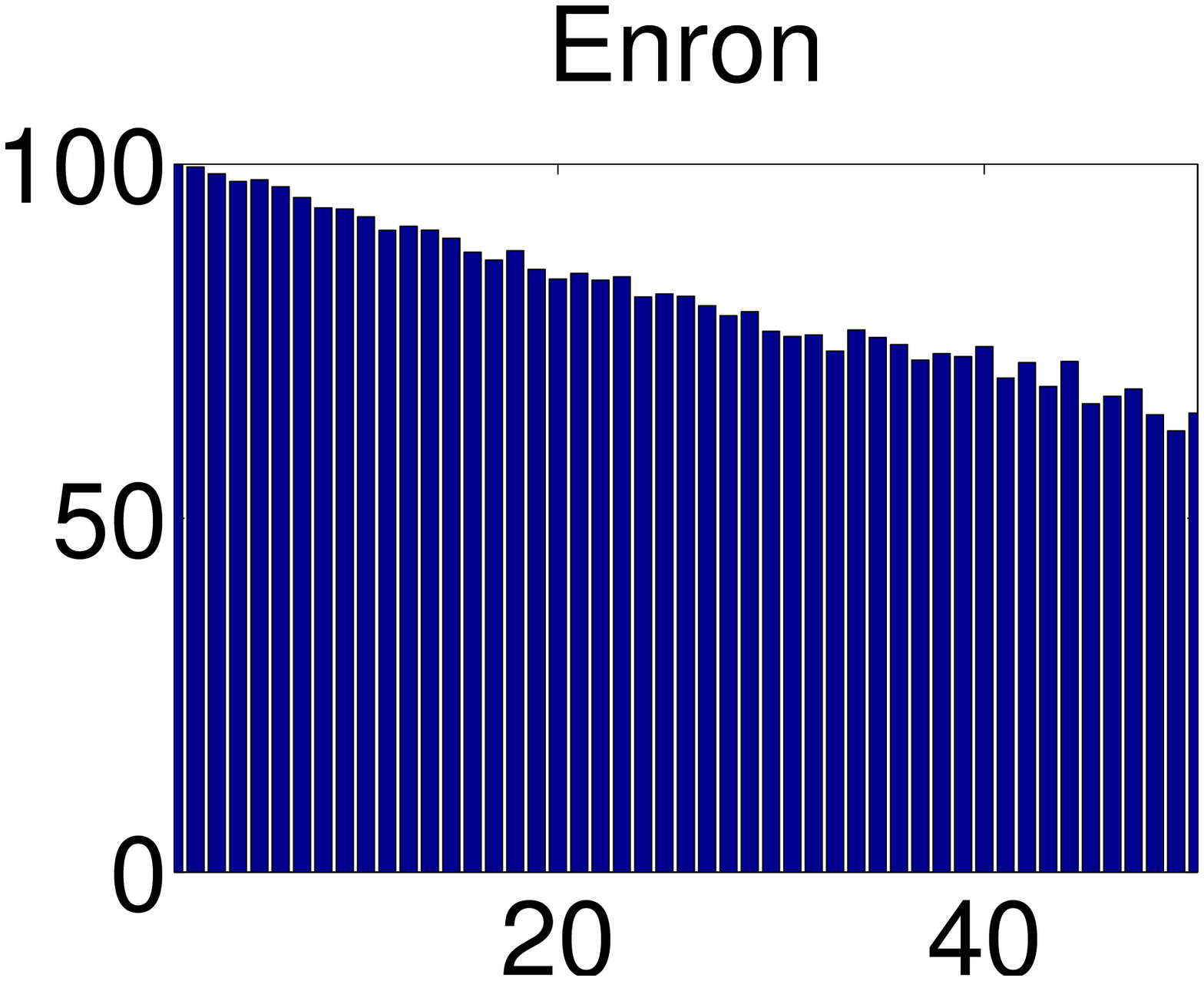} }
		\subfigure[]{ \includegraphics[width = \subfigurewidth]{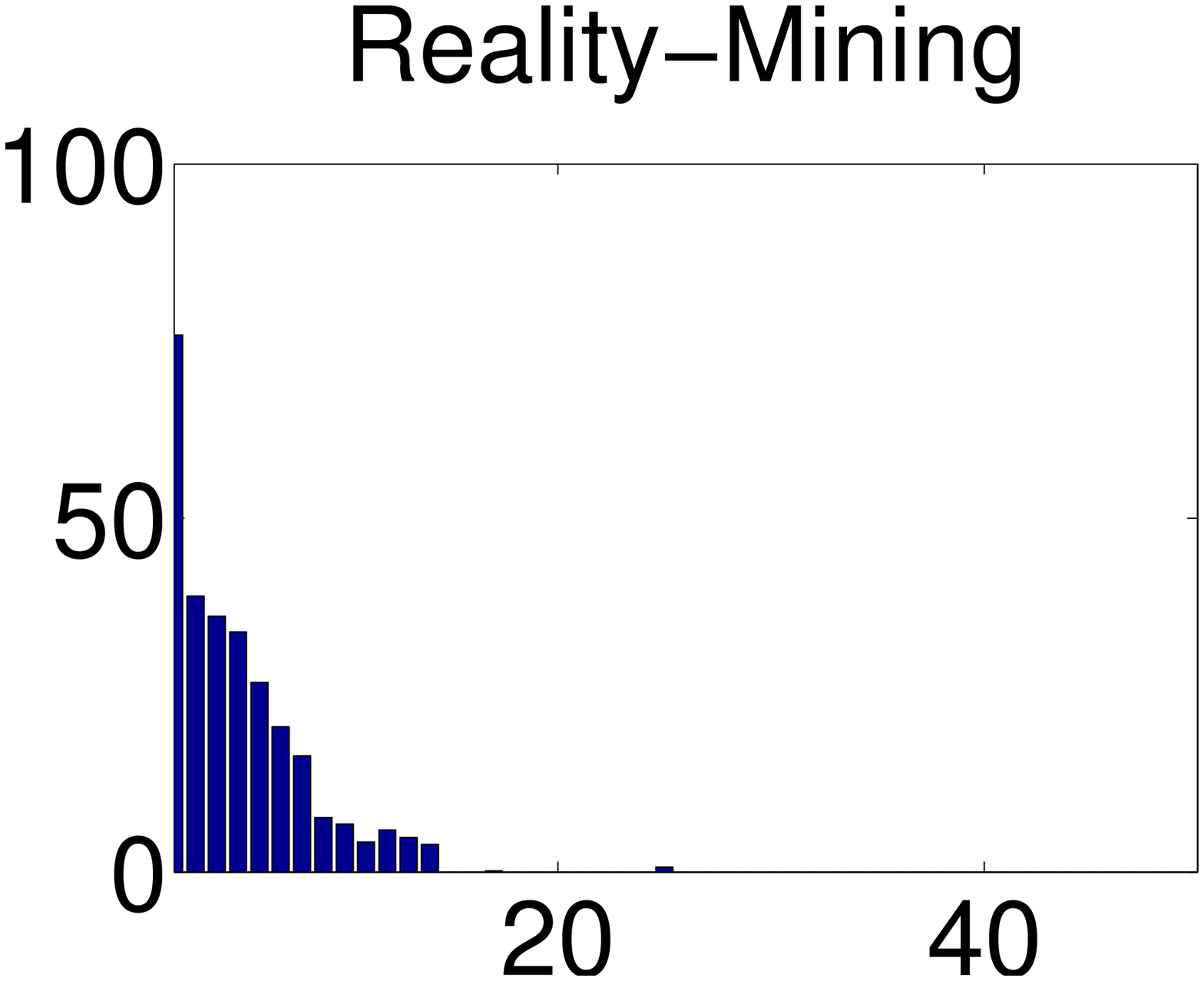} }
		\subfigure[]{ \includegraphics[width = \subfigurewidth]{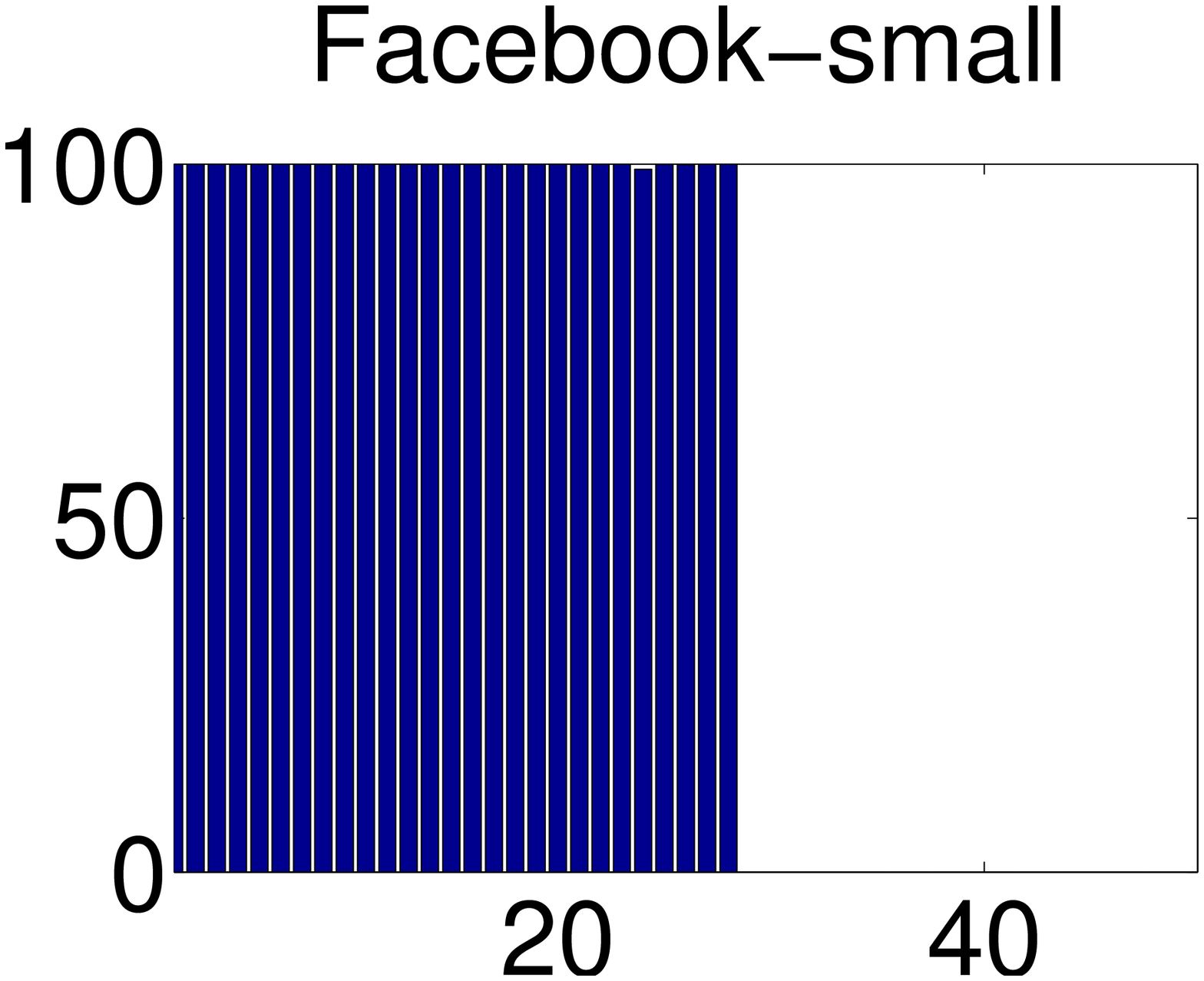} }
		\subfigure[]{ \includegraphics[width = \subfigurewidth]{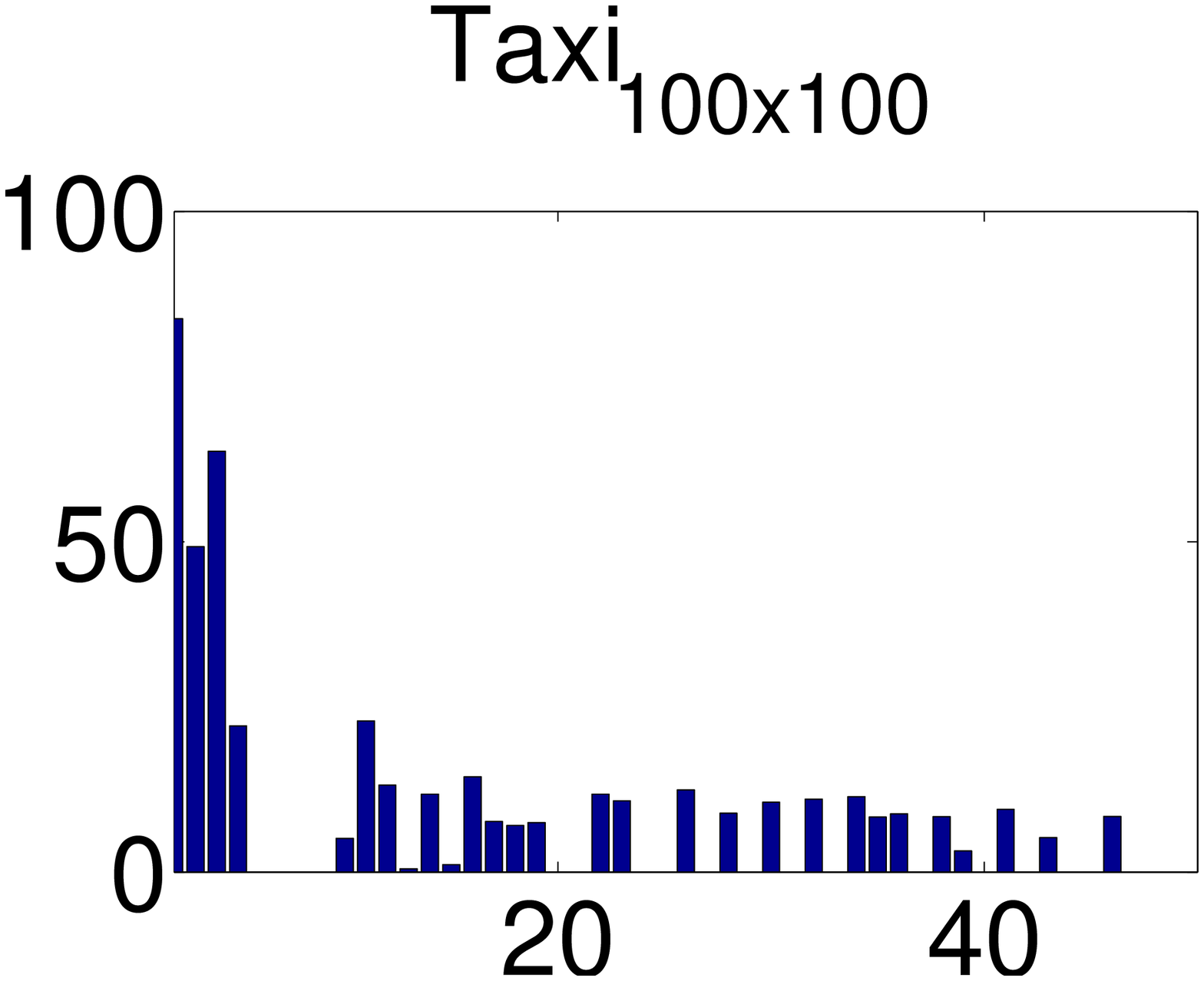} }
		\subfigure[]{ \includegraphics[width = \subfigurewidth]{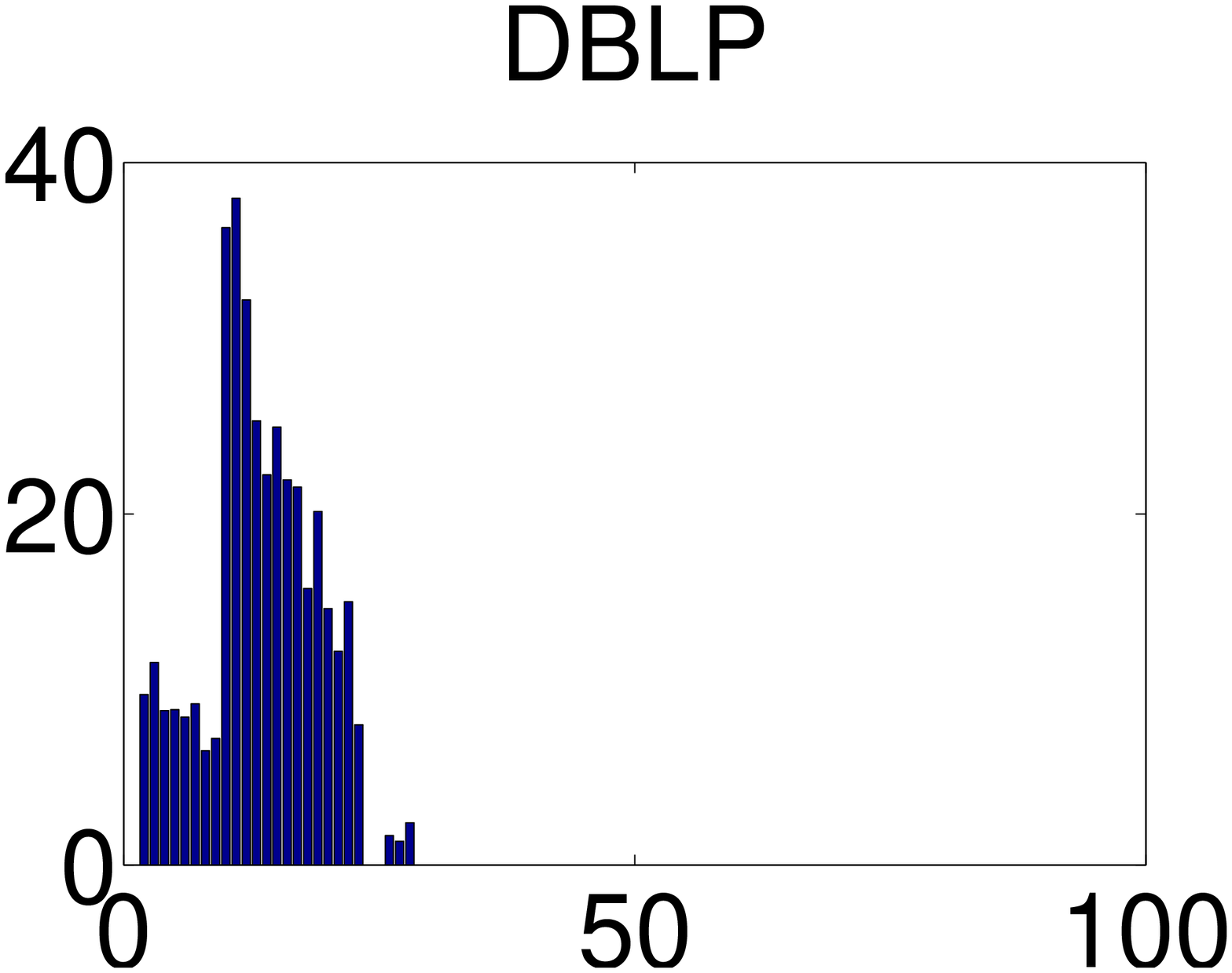} }
		\subfigure[]{ \includegraphics[width = \subfigurewidth]{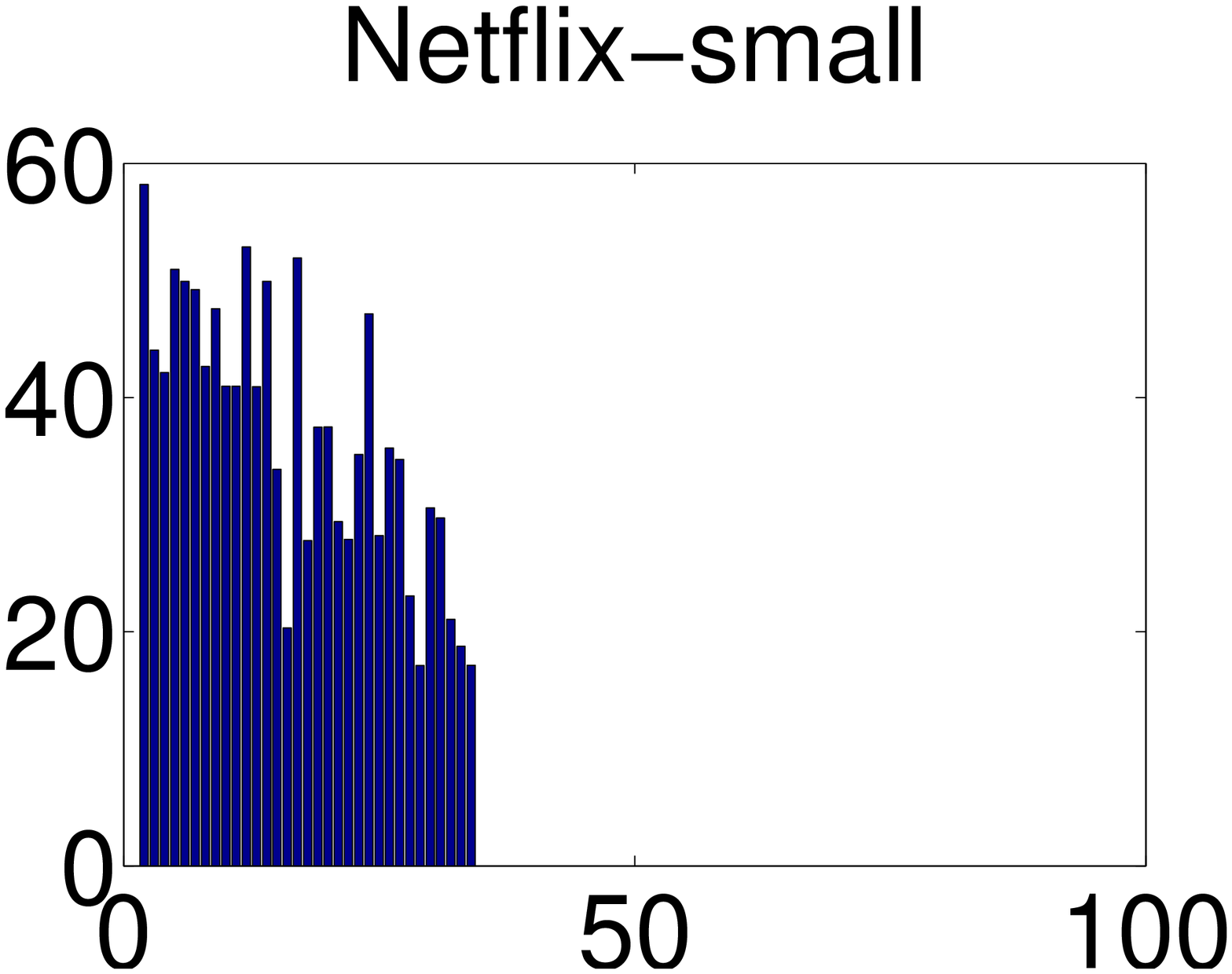} }
		\subfigure[]{ \includegraphics[width = \subfigurewidth]{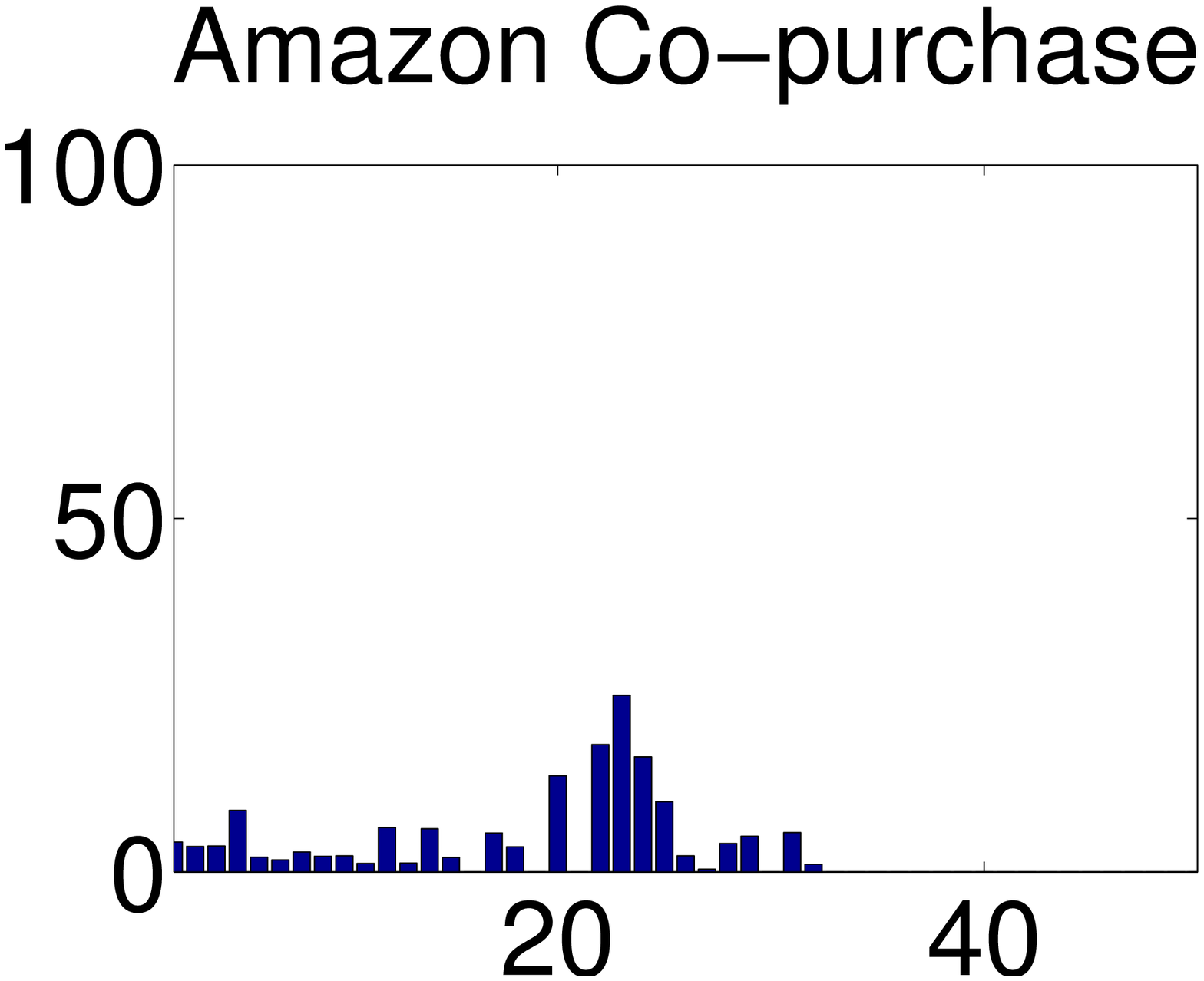} }
		\subfigure[]{ \includegraphics[width = \subfigurewidth]{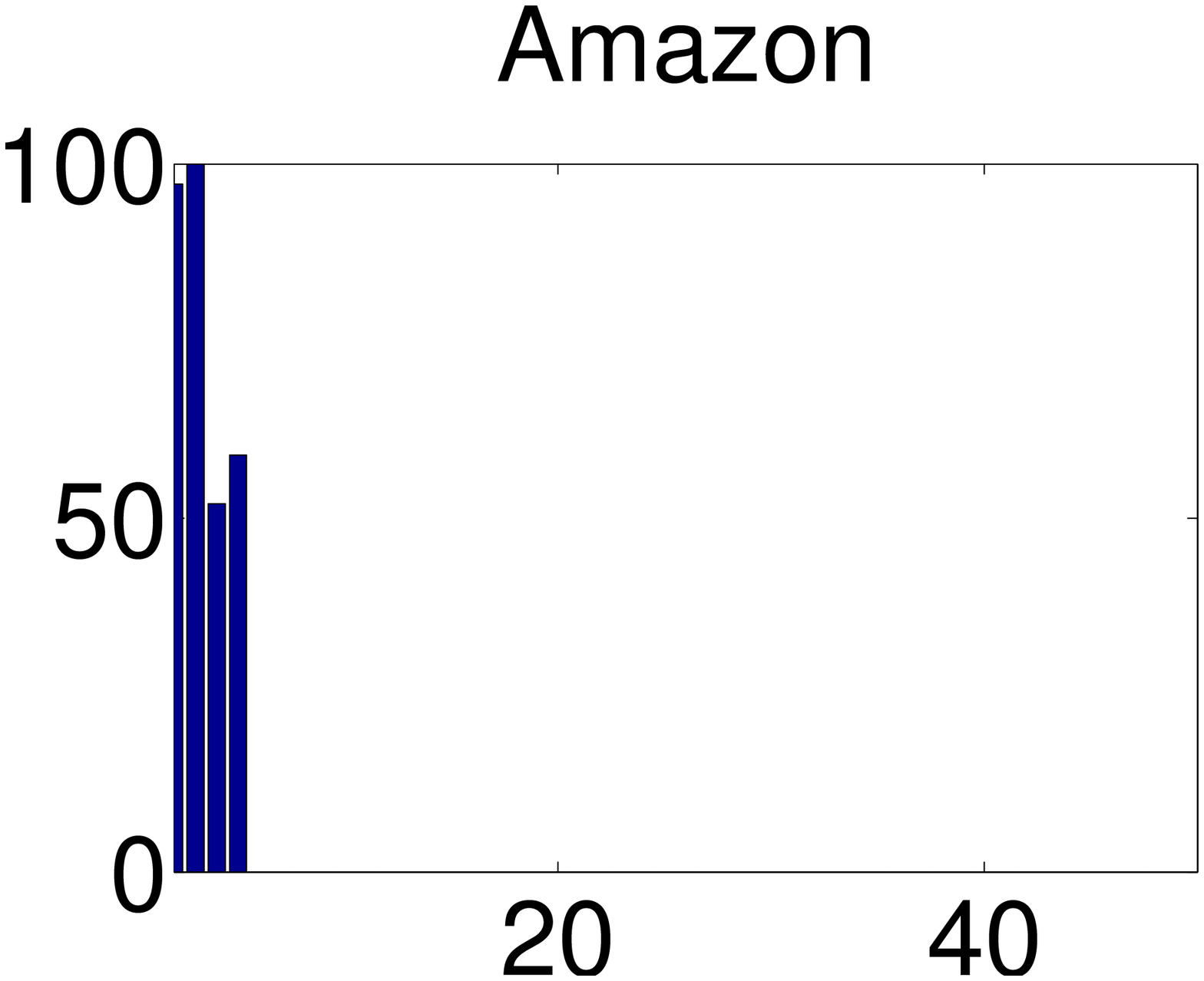} }
		\subfigure[]{ \includegraphics[width = \subfigurewidth]{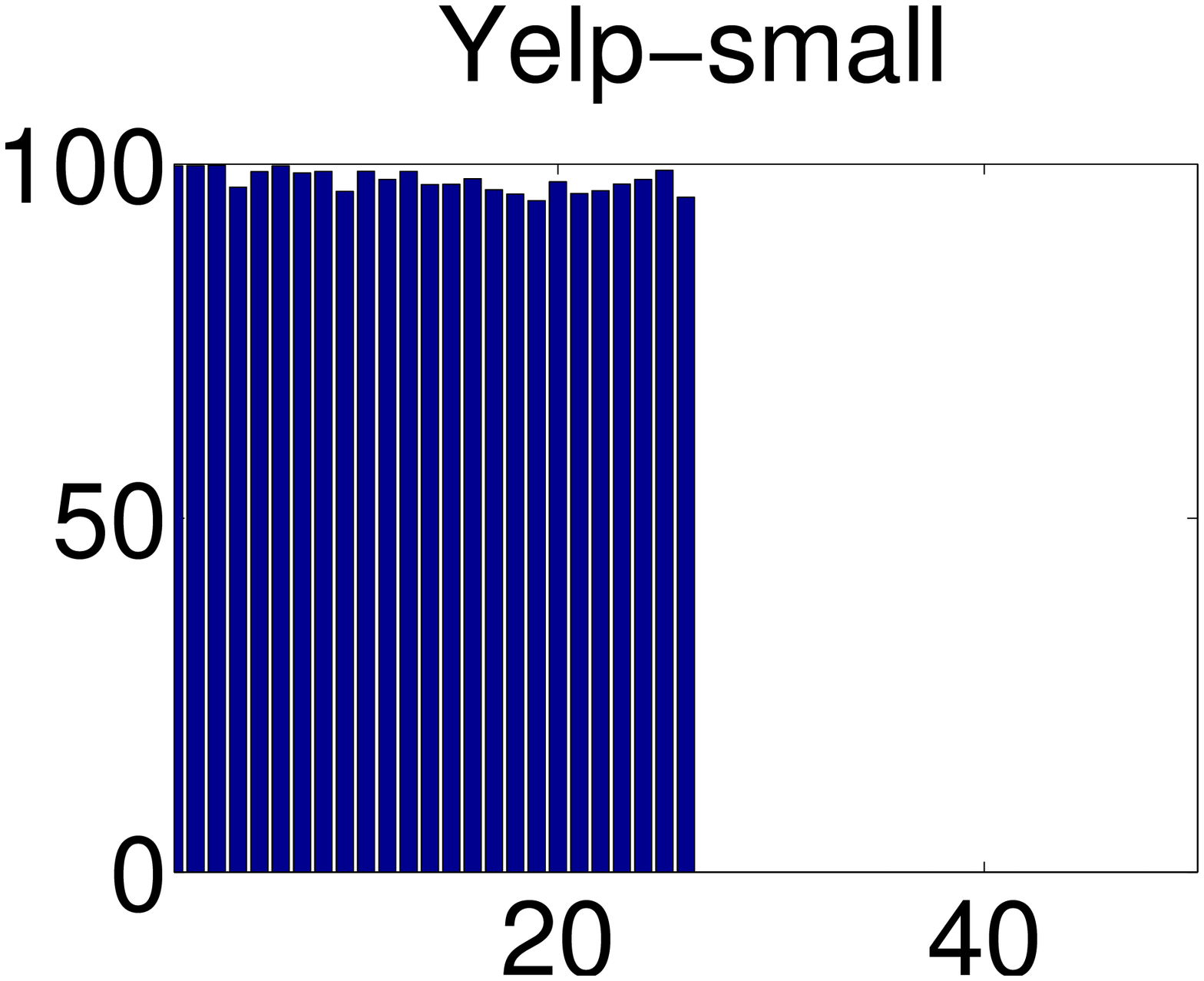} }
		\subfigure[]{ \includegraphics[width = \subfigurewidth]{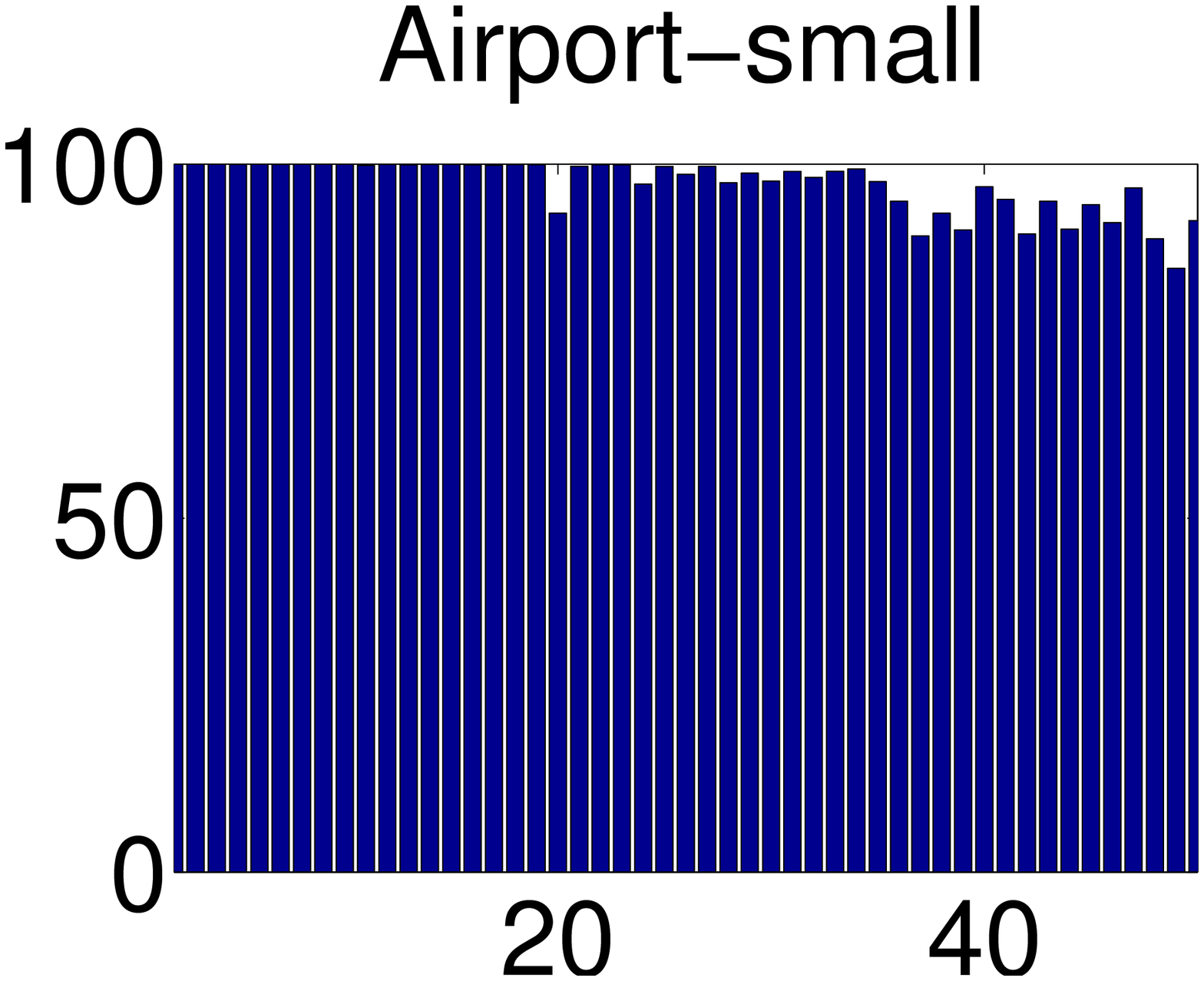} }

	\end{center}
	\caption{CORCONDIA for \cpapr}
	\label{fig:real_kl}
\end{figure*}

We ran our algorithms for $F=2\cdots50$, and truncated negative values to zero. For KL-Divergence  and datasets \facebook, \netflix, \yelp, and \airport we used smaller versions (first 500 rows for \netflix and \yelp, and first 1000 rows for \facebook and \airport), due to high memory requirements of Matlab; this means that the corresponding figures describe the rank structure of a smaller dataset, which might be different from the full one. Figure \ref{fig:real_fro} shows CORCONDIA when using Frobenius norm as a loss, and Fig. \ref{fig:real_kl} when using KL-Divergence. The way to interpret these figures is the following: assuming a \cpnmu (Fig. \ref{fig:real_fro}) or a \cpapr (Fig. \ref{fig:real_kl}) model, each figure shows the modelling quality of the data for a given rank. This sheds light to the rank structure of a particular dataset (although that is not to say that it provides a definitive answer about its true rank). For the given datasets, we observe a few interesting differences in structure: for instance, \enron and \taxi in Fig. \ref{fig:real_fro} seem to have good quality for a few components. On the other hand, \reality, \dblp, and \amazonmeta have reasonably acceptable quality for a larger range of components, with the quality decreasing as the number gets higher. Another interesting observation, confirming recent results in \cite{zhang2014understanding}, is that \yelp seems to be modelled better using a high number of components. Figures that are all-zero merely show that no good structure was detected for up to 50 components, however, this might indicate that such datasets (e.g. \netflix) have an even higher number of components.
Finally, contrasting Fig. \ref{fig:real_kl} to Fig. \ref{fig:real_fro}, we observe that in many cases using the KL-Divergence is able to discover better structure than the Frobenius norm (e.g. \enron and \amazonco).

\subsection{\methodplain in practice}
\label{sec:study2}
We used \method to analyze two of the datasets shown in Table \ref{tab:datasets}. In the following lines we show our results.

\subsubsection{Analyzing \taxi}
The data we have span an entire week worth of measurements, with temporal granularity of minutes. First, we tried quantizing the latitude and longitude into a $1000\times1000$ grid; however, \method warned us that the decomposition was not able to detect good and coherent structure in the data, perhaps due to the extremely sparse variable space of our grid. Subsequently, we modelled the data using a $100\times100$  grid and \method was able to detect good structure. In particular, \method output 8 rank-one components, choosing Frobenius norm as a loss function.

In Figure \ref{fig:beijing_taxi} we show 4 representative components of the decomposition. In each sub-figure, we overlay the map of Beijing with the coordinates that appear to have high activity in the particular component; every sub-figure also shows the temporal profile of the component. The first two components (Fig. \ref{fig:beijing_taxi}(a), (b)) spatially refer to a similar area, roughly corresponding to the tourist and business center in the central rings of the city. The difference is that Fig. \ref{fig:beijing_taxi}(a) shows high activity during the weekdays and declining activity over the weekend (indicated by five peaks of equal height, followed by two smaller peaks), whereas Fig. \ref{fig:beijing_taxi}(b) shows a slightly inverted temporal profile, where the activity peaks over the weekend; we conclude that Fig. \ref{fig:beijing_taxi}(a) most likely captures business traffic that peaks during the week, whereas Fig. \ref{fig:beijing_taxi}(b) captures tourist and leisure traffic that peaks over the weekend. The third component (Fig. \ref{fig:beijing_taxi}(c)) is highly active around the Olympic Center and Convention Center area, with peaking activity in the middle of the week.
Finally, the last component (Fig. \ref{fig:beijing_taxi}(d) ) shows high activity only outside of Beijing's international airport, where taxis gather to pick-up customers; on the temporal side, we see daily peaks of activity, with the collective activity dropping during the weekend, when there is significantly less traffic of people coming to the city for business.
By being able to analyze such trajectory data into highly interpretable results, we can help policymakers to better understand the traffic patterns of taxis in big cities, estimate high and low demand areas and times and optimize city planning in that respect. There has been very recent work \cite{Wang2014travel} towards the same direction, and we view our results as complementary.

\begin{figure*}[!ht]
	\begin{center}
		\subfigure[{\bf Tourist \& Business Center}: High activity during weekdays, low over the weekend]{\includegraphics[width = 0.495\textwidth]{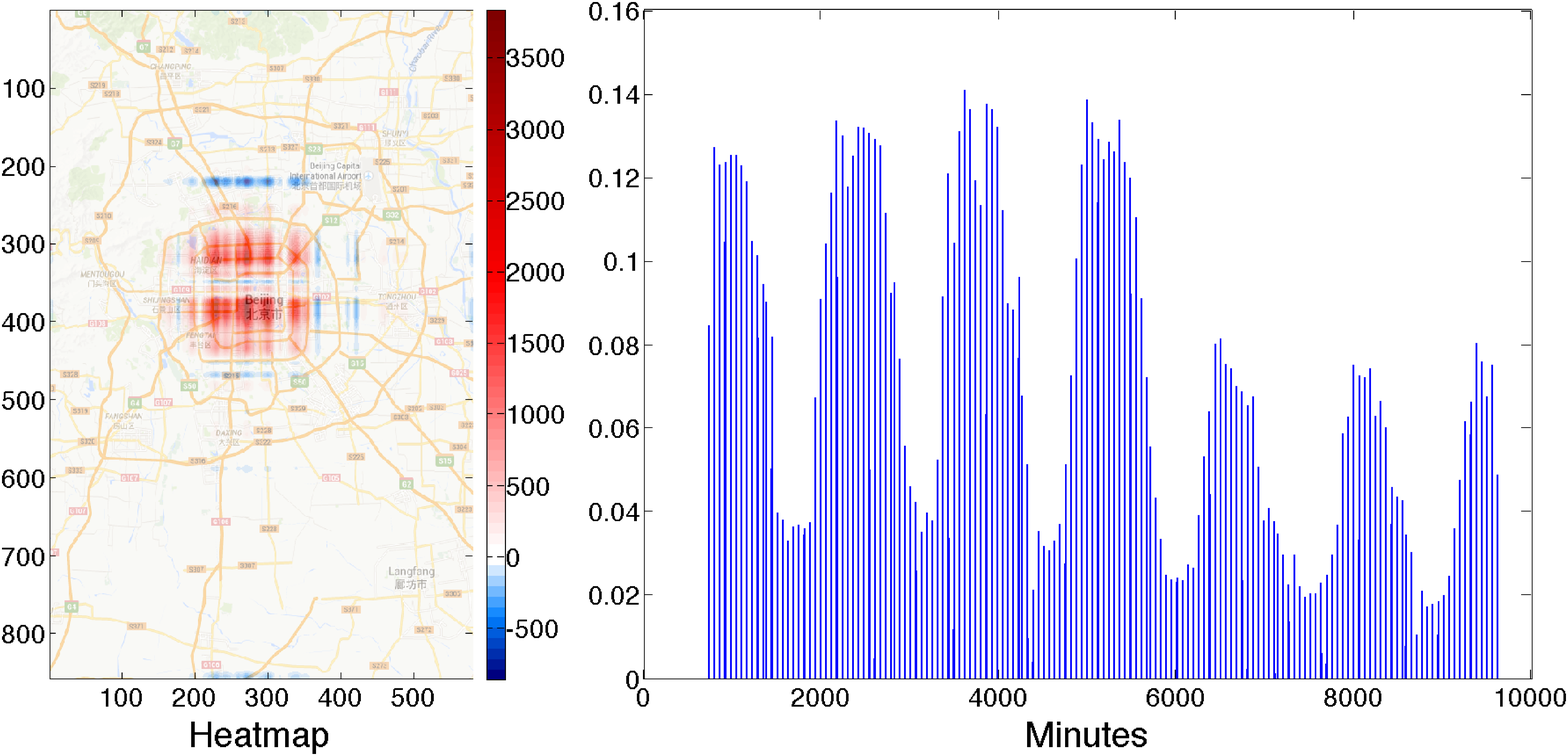}}
		\subfigure[{\bf Downtown}: Consistent activity over the week]{\includegraphics[width = 0.495\textwidth]{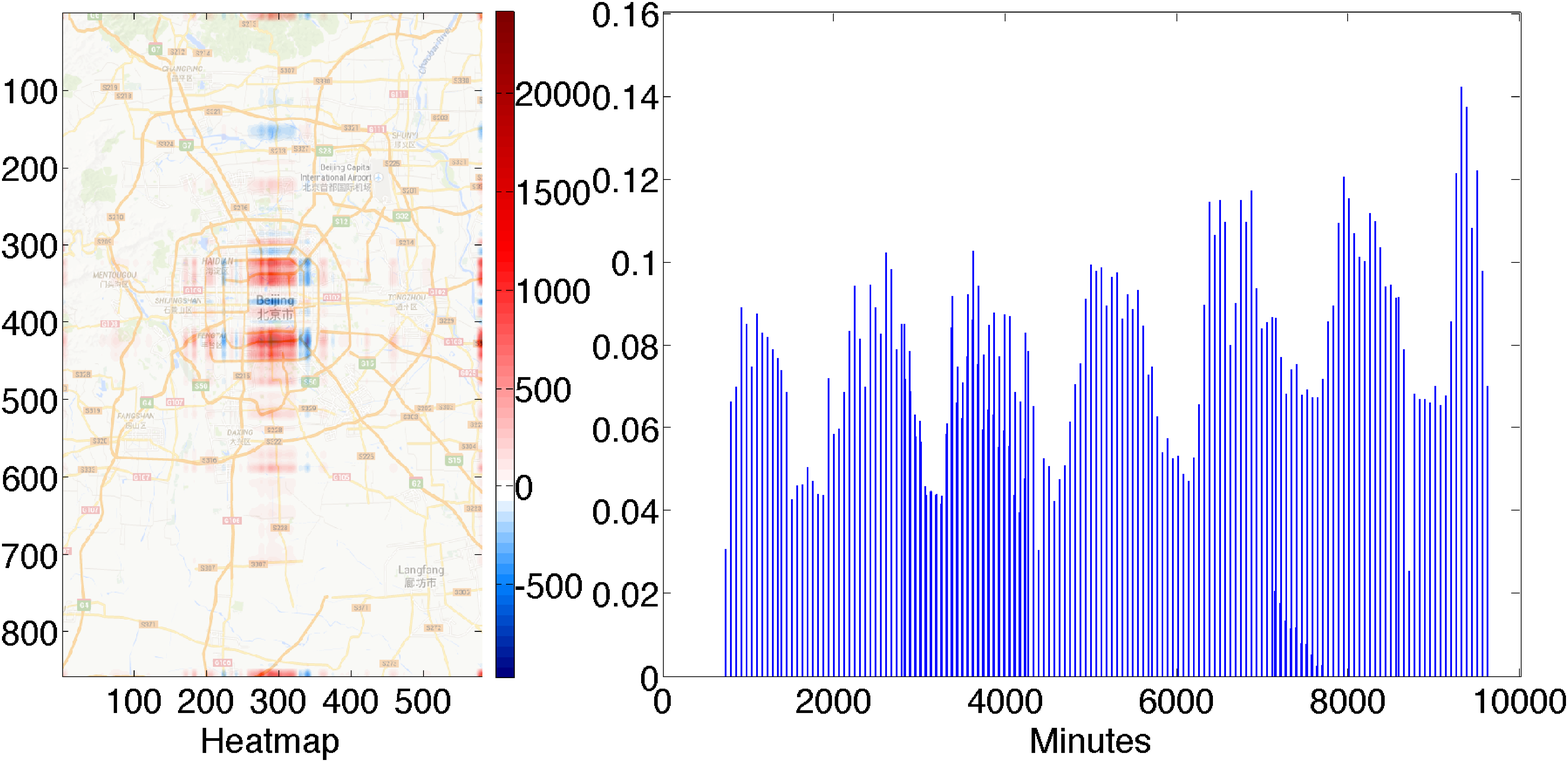}}
		\subfigure[{\bf Olympic Center}: Activity peak during the week]{\includegraphics[width = 0.495\textwidth]{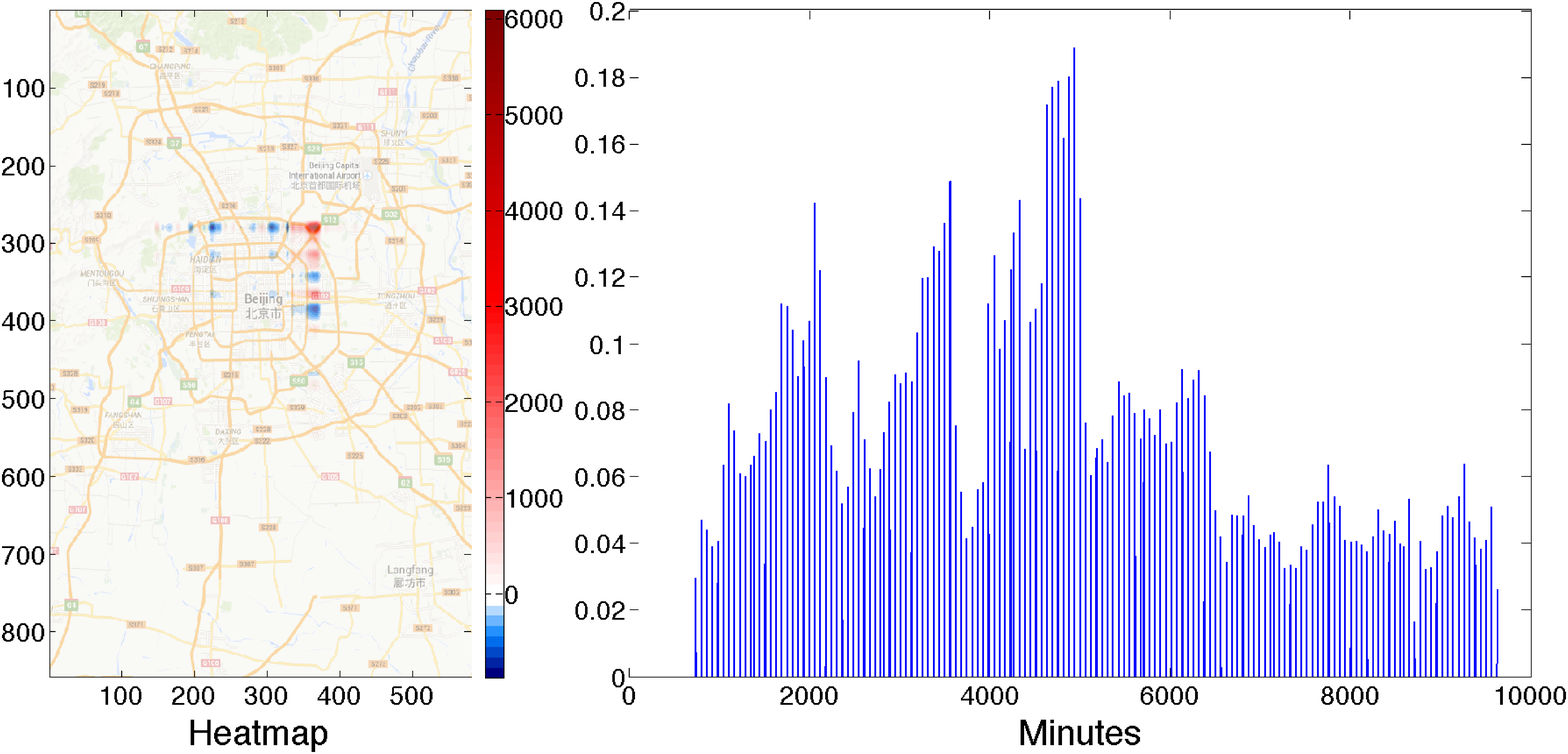}}
		\subfigure[{\bf Airport}: High activity during weekdays, low over the weekend]{\includegraphics[width = 0.495\textwidth]{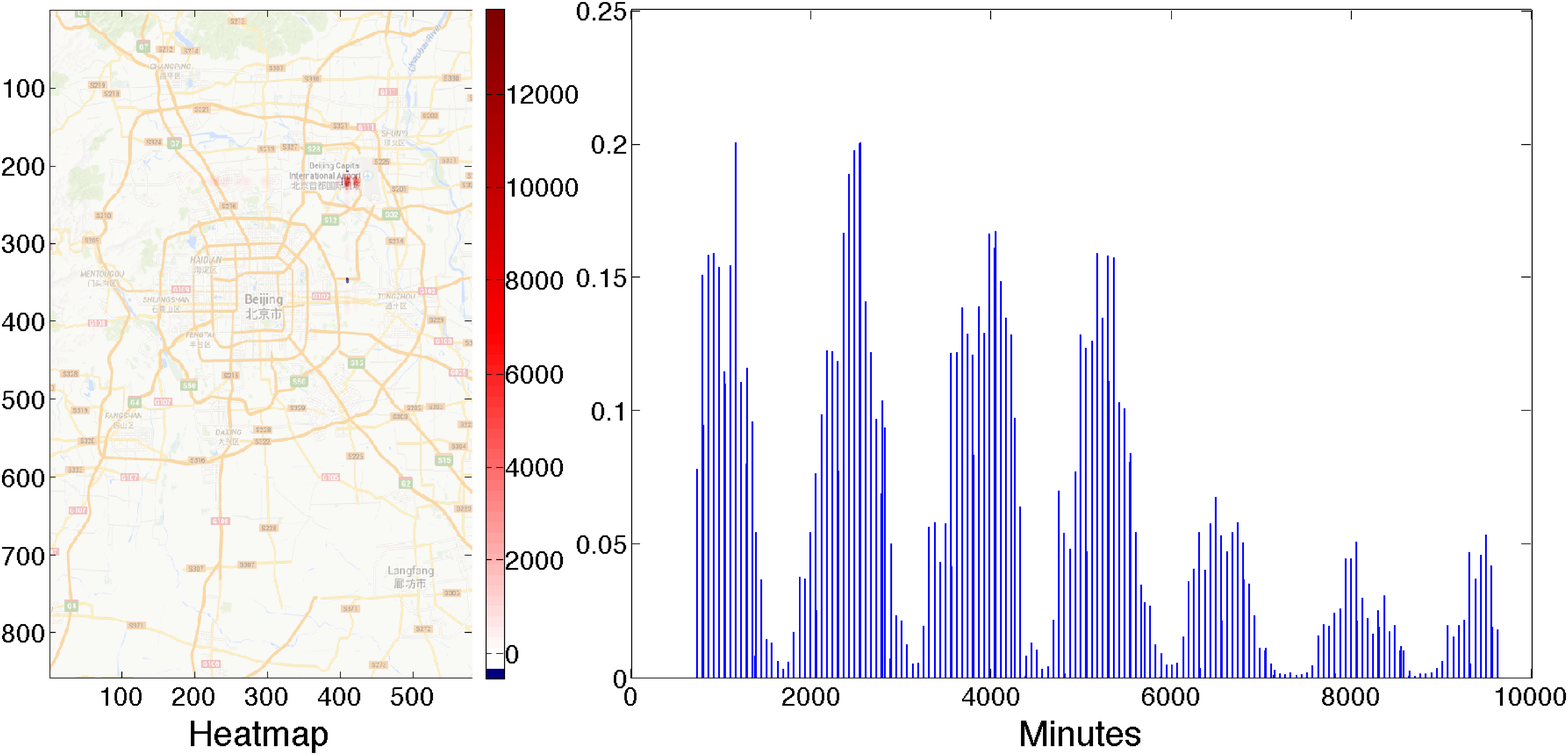}}
	\end{center}
	\caption{Latent components of the \taxi dataset, as extracted using \method.}
	\label{fig:beijing_taxi}
\end{figure*}

\subsubsection{Analyzing \amazonco}
\begin{table*}[!htf]
	\begin{center}
	\scriptsize
		\begin{tabular}{|c|c|c|}
		\hline
		{\bf Cluster type} & {\bf Products} & {\bf Product Types}  \\\hline
		\multirow{4}{*}{\#1 Self Improvement} & Resolving Conflicts At Work : A Complete Guide for Everyone on the Job  & Book  \\
			&	How to Kill a Monster (Goosebumps)   &	Book \\
			&	Mensa Visual Brainteasers & Book \\
			&	Learning in Overdrive: Designing Curriculum, Instruction, and Assessment from Standards : A Manual for Teachers &	 Book \\\hline
		\multirow{2}{*}{\#2 Psychology, Self Improvement} & Physicians of the Soul: The Psychologies of the World's Greatest Spiritual Leaders    & Book \\
			&	The Rise of the Creative Class: And How It's Transforming Work, Leisure, Community and Everyday Life & Book \\ \hline
		\multirow{3}{*}{\#3 Technical Books} & Beginning ASP.NET Databases using C\#  & Book \\
			&	BizPricer Business Valuation Manual w\/Software	& Book \\
			&	Desde Que Samba E Samba & Music \\ \hline
		\multirow{3}{*}{\#4 History} & War at Sea: A Naval History of World War II    & Book \\
			&	Jailed for Freedom: American Women Win the Vote		&	Book \\
			&	The Perfect Plan (7th Heaven)	&	Book \\ \hline
	\end{tabular}
	\end{center}
	\caption{Latent components of the \amazonco dataset, as extracted using \method}
	\label{tab:amazon}
\end{table*}
This dataset records pairs of products that were purchased together by the same customer on Amazon, as well as the category of the first product in the pair. This dataset, as shown in Figures \ref{fig:real_fro}(g) and \ref{fig:real_kl} does not have perfect trilinear structure, however a low rank trilinear approximation still offers reasonably good insights for product recommendation and market basket analysis.
By analyzing this dataset, we seek to find coherent groups of products that people tend to purchase together, aiming for better product recommendations and suggestions. For the purposes of this study, we extracted a small piece of the co-purchase network of 256 products. \method was able to extract 24 components by choosing KL-Divergence as a loss. On Table \ref{tab:amazon} we show a representative subset of our resulting components (which were remarkably sparse, due to the KL-Divergence fitting by \cpapr). We observe that products of similar genre and themes tend to naturally cluster together. For instance, cluster \#1 contains mostly self improvement books. We also observe a few topical outliers, such as the book \texttt{How to Kill a Monster (Goosebumps)} in cluster \#1, and CD \texttt{Desde Que Samba E Samba} in cluster \#3 that contains Technical / Software Development books.
\section{Related Work}
\label{sec:related}
\paragraph{Tensors and their data mining applications}
One of the first applications was on web mining, extending the popular HITS algorithm \cite{kolda2005higher}. There has been work on analyzing citation networks (such as DBLP) \cite{kolda2008scalable}, detecting anomalies in computer networks\cite{kolda2008scalable,maruhashi2011multiaspectforensics,papalexakis2012parcube}, extracting patterns from and completing Knowledge Bases \cite{kang2012gigatensor,papalexakis2012parcube,chang2014typed}  and analyzing time-evolving or multi-view social networks. \cite{bader2007temporal,kolda2008scalable,lin2009metafac,papalexakis2012parcube,araujo2014com2,jiang2014fema},
The long list of application continues, with extensions of Latent Semantic Analysis \cite{cai2006tensor,chang2013multi}, extensions of Subspace Clustering to higher orders \cite{huang2008simultaneous}, Crime Forecasting \cite{mu2011empirical}, Image Processing \cite{liu2013tensor}, mining Brain data \cite{davidson2013network,he2014dusk,papalexakisturbo}, trajectory and mobility data  \cite{zheng2010collaborative,Wang2014travel},  and bioinformatics \cite{ho2014marble}.
\paragraph{Choosing the right number of components}
As we've mentioned throughout the text, CORCONDIA \cite{bro1998multi} is using properties of the PARAFAC decomposition in order to hint towards the right number of components. In \cite{papalexakis2015fastcorcondia}, we introduce a scalable algorithm for CORCONDIA (under the Frobenius norm). Moving away from the PARAFAC decompostion, Kiers and Kinderen \cite{kiers2003fast} introduce a method for choosing the number of components for Tucker3.
There has been recent work using Minimum Description Length (MDL): In \cite{araujo2014com2} the authors use MDL in the context of community detection in time-evolving social network tensors, whereas in \cite{metzler2015clustering}, Metzler and Miettinen use MDL to score the quality of components for a binary tensor factorization. Finally, there have also been recent advances using Bayesian methods \cite{zhaobayesian} in order to automatically decide the number of components.

\section{Conclusions}
\label{sec:conclusions}
In this paper, we work towards an automatic, unsupervised tensor mining algorithm that minimizes user intervention. Our main contributions are:
\begin{itemize}\setlength{\itemsep}{0.0mm}
	\item {\bf Algorithms} We propose \method, a novel automatic and unsupervised tensor mining algorithm, which can provide quality characterization of the solution. We extend the highly intuitive heuristic of \cite{bro1998multi} for KL-Divergence loss, providing an efficient algorithm.
	\item{\bf Evaluation \& Discovery} We evaluate our methods in synthetic data, showing their superiority compared to the baselines, as well as a wide variety of real datasets. Finally, we apply \method to two real datasets discovering meaningful patterns (Section \ref{sec:study2}).
\end{itemize}

\section{Acknowledgements}
{\scriptsize
Research was supported by the National Science Foundation Grant No. IIS-1247489. Any opinions, findings, and conclusions or recommendations expressed in this material are those of the author(s) and do not necessarily reflect the views of the funding parties.
The author would like to thank Professors Christos Faloutsos, Nicholas Sidiropoulos, and Rasmus Bro for valuable conversations, and Miguel Araujo for sharing the \airport dataset.
}

\balance
\bibliographystyle{IEEEbib}
\bibliography{BIB/vagelis_refs.bib}

\begin{thebibliography}{10}

\bibitem{kolda2005higher}
Tamara~G Kolda, Brett~W Bader, and Joseph~P Kenny,
\newblock ``Higher-order web link analysis using multilinear algebra,''
\newblock in {\em Data Mining, Fifth IEEE International Conference on}. IEEE,
  2005, pp. 8--pp.

\bibitem{kolda2008scalable}
Tamara~G Kolda and Jimeng Sun,
\newblock ``Scalable tensor decompositions for multi-aspect data mining,''
\newblock in {\em Data Mining, 2008. ICDM'08. Eighth IEEE International
  Conference on}. IEEE, 2008, pp. 363--372.

\bibitem{maruhashi2011multiaspectforensics}
K.~Maruhashi, F.~Guo, and C.~Faloutsos,
\newblock ``Multiaspectforensics: Pattern mining on large-scale heterogeneous
  networks with tensor analysis,''
\newblock in {\em Proceedings of the Third International Conference on Advances
  in Social Network Analysis and Mining}, 2011.

\bibitem{papalexakis2012parcube}
Evangelos~E Papalexakis, Christos Faloutsos, and Nicholas~D Sidiropoulos,
\newblock ``Parcube: Sparse parallelizable tensor decompositions,''
\newblock in {\em Machine Learning and Knowledge Discovery in Databases}, pp.
  521--536. Springer, 2012.

\bibitem{kang2012gigatensor}
U~Kang, Evangelos Papalexakis, Abhay Harpale, and Christos Faloutsos,
\newblock ``Gigatensor: scaling tensor analysis up by 100 times-algorithms and
  discoveries,''
\newblock in {\em Proceedings of the 18th ACM SIGKDD international conference
  on Knowledge discovery and data mining}. ACM, 2012, pp. 316--324.

\bibitem{chang2013multi}
Kai-Wei Chang, Wen-tau Yih, and Christopher Meek,
\newblock ``Multi-relational latent semantic analysis.,''
\newblock in {\em EMNLP}, 2013, pp. 1602--1612.

\bibitem{chang2014typed}
Kai-Wei Chang, Wen-tau Yih, Bishan Yang, and Christopher Meek,
\newblock ``Typed tensor decomposition of knowledge bases for relation
  extraction,''
\newblock in {\em Proceedings of the 2014 Conference on Empirical Methods in
  Natural Language Processing (EMNLP)}, 2014, pp. 1568--1579.

\bibitem{bader2007temporal}
Brett~W Bader, Richard~A Harshman, and Tamara~G Kolda,
\newblock ``Temporal analysis of semantic graphs using asalsan,''
\newblock in {\em Data Mining, 2007. ICDM 2007. Seventh IEEE International
  Conference on}. IEEE, 2007, pp. 33--42.

\bibitem{lin2009metafac}
Yu-Ru Lin, Jimeng Sun, Paul Castro, Ravi Konuru, Hari Sundaram, and Aisling
  Kelliher,
\newblock ``Metafac: community discovery via relational hypergraph
  factorization,''
\newblock in {\em Proceedings of the 15th ACM SIGKDD international conference
  on Knowledge discovery and data mining}. ACM, 2009, pp. 527--536.

\bibitem{jiang2014fema}
Meng Jiang, Peng Cui, Fei Wang, Xinran Xu, Wenwu Zhu, and Shiqiang Yang,
\newblock ``Fema: flexible evolutionary multi-faceted analysis for dynamic
  behavioral pattern discovery,''
\newblock in {\em Proceedings of the 20th ACM SIGKDD international conference
  on Knowledge discovery and data mining}. ACM, 2014, pp. 1186--1195.

\bibitem{SIAM-67648}
Brett~W. Bader and Tamara~G. Kolda,
\newblock ``Efficient {MATLAB} computations with sparse and factored tensors,''
\newblock {\em SIAM Journal on Scientific Computing}, vol. 30, no. 1, pp.
  205--231, December 2007.

\bibitem{erdos2013walk}
D{\'o}ra Erdos and Pauli Miettinen,
\newblock ``Walk'n'merge: A scalable algorithm for boolean tensor
  factorization,''
\newblock in {\em Data Mining (ICDM), 2013 IEEE 13th International Conference
  on}. IEEE, 2013, pp. 1037--1042.

\bibitem{beutelflexifact}
Alex Beutel, Abhimanu Kumar, Evangelos Papalexakis, Partha~Pratim Talukdar,
  Christos Faloutsos, and Eric~P Xing,
\newblock ``Flexifact: Scalable flexible factorization of coupled tensors on
  hadoop,''
\newblock in {\em SIAM SDM}, 2014.

\bibitem{de2014distributed}
Andr{\'e}~LF De~Almeida, Alain~Y Kibangou, et~al.,
\newblock ``Distributed large-scale tensor decomposition,''
\newblock in {\em Acoustics, Speech and Signal Processing (ICASSP), 2011 IEEE
  International Conference on}, 2014.

\bibitem{hillar2013most}
Christopher~J Hillar and Lek-Heng Lim,
\newblock ``Most tensor problems are np-hard,''
\newblock {\em Journal of the ACM (JACM)}, vol. 60, no. 6, pp. 45, 2013.

\bibitem{kolda2006tophits}
T.G. Kolda and B.W. Bader,
\newblock ``The tophits model for higher-order web link analysis,''
\newblock in {\em Workshop on Link Analysis, Counterterrorism and Security},
  2006, vol.~7, pp. 26--29.

\bibitem{sun2006beyond}
J.~Sun, D.~Tao, and C.~Faloutsos,
\newblock ``Beyond streams and graphs: dynamic tensor analysis,''
\newblock in {\em Proceedings of the 12th ACM SIGKDD international conference
  on Knowledge discovery and data mining}. ACM, 2006, pp. 374--383.

\bibitem{dunlavy2011temporal}
Daniel~M Dunlavy, Tamara~G Kolda, and Evrim Acar,
\newblock ``Temporal link prediction using matrix and tensor factorizations,''
\newblock {\em ACM Transactions on Knowledge Discovery from Data (TKDD)}, vol.
  5, no. 2, pp. 10, 2011.

\bibitem{rendle2010pairwise}
Steffen Rendle and Lars Schmidt-Thieme,
\newblock ``Pairwise interaction tensor factorization for personalized tag
  recommendation,''
\newblock in {\em Proceedings of the third ACM international conference on Web
  search and data mining}. ACM, 2010, pp. 81--90.

\bibitem{tao2005supervised}
Dacheng Tao, Xuelong Li, Weiming Hu, Stephen Maybank, and Xindong Wu,
\newblock ``Supervised tensor learning,''
\newblock in {\em Data Mining, Fifth IEEE International Conference on}. IEEE,
  2005, pp. 8--pp.

\bibitem{he2014dusk}
Lifang He, Xiangnan Kong, S~Yu Philip, Ann~B Ragin, Zhifeng Hao, and Xiaowei
  Yang,
\newblock ``Dusk: A dual structure-preserving kernel for supervised tensor
  learning with applications to neuroimages,''
\newblock {\em matrix}, vol. 3, no. 1, pp. 2, 2014.

\bibitem{araujo2014com2}
Miguel Araujo, Spiros Papadimitriou, Stephan G{\"u}nnemann, Christos Faloutsos,
  Prithwish Basu, Ananthram Swami, Evangelos~E Papalexakis, and Danai Koutra,
\newblock ``Com2: Fast automatic discovery of temporal ('comet') communities,''
\newblock in {\em Advances in Knowledge Discovery and Data Mining}, pp.
  271--283. Springer, 2014.

\bibitem{metzler2015clustering}
Saskia Metzler and Pauli Miettinen,
\newblock ``Clustering boolean tensors,''
\newblock {\em arXiv preprint arXiv:1501.00696}, 2015.

\bibitem{zhaobayesian}
Q~Zhao, L~Zhang, and A~Cichocki,
\newblock ``Bayesian cp factorization of incomplete tensors with automatic rank
  determination,''
\newblock .

\bibitem{PARAFAC}
R.A. Harshman,
\newblock ``Foundations of the parafac procedure: Models and conditions for an"
  explanatory" multimodal factor analysis,''
\newblock 1970.

\bibitem{bro1998multi}
Rasmus Bro,
\newblock {\em Multi-way analysis in the food industry: models, algorithms, and
  applications},
\newblock Ph.D. thesis, 1998.

\bibitem{chi2012tensors}
Eric~C Chi and Tamara~G Kolda,
\newblock ``On tensors, sparsity, and nonnegative factorizations,''
\newblock {\em SIAM Journal on Matrix Analysis and Applications}, vol. 33, no.
  4, pp. 1272--1299, 2012.

\bibitem{buis1996efficient}
Paul~E Buis and Wayne~R Dyksen,
\newblock ``Efficient vector and parallel manipulation of tensor products,''
\newblock {\em ACM Transactions on Mathematical Software (TOMS)}, vol. 22, no.
  1, pp. 18--23, 1996.

\bibitem{tensortoolbox}
B.W. Bader and T.G. Kolda,
\newblock ``Matlab tensor toolbox version 2.2,''
\newblock {\em Albuquerque, NM, USA: Sandia National Laboratories}, 2007.

\bibitem{kroonenberg1980principal}
Pieter~M Kroonenberg and Jan De~Leeuw,
\newblock ``Principal component analysis of three-mode data by means of
  alternating least squares algorithms,''
\newblock {\em Psychometrika}, vol. 45, no. 1, pp. 69--97, 1980.

\bibitem{de2008decompositions}
Lieven De~Lathauwer and Dimitri Nion,
\newblock ``Decompositions of a higher-order tensor in block terms-part iii:
  Alternating least squares algorithms,''
\newblock {\em SIAM journal on Matrix Analysis and Applications}, vol. 30, no.
  3, pp. 1067--1083, 2008.

\bibitem{bro2003new}
Rasmus Bro and Henk~AL Kiers,
\newblock ``A new efficient method for determining the number of components in
  parafac models,''
\newblock {\em Journal of chemometrics}, vol. 17, no. 5, pp. 274--286, 2003.

\bibitem{papalexakis2015fastcorcondia}
E.E. Papalexakis and C.~Faloutsos,
\newblock ``Fast efficient and scalable core consistency diagnostic for the
  parafac decomposition for big sparse tensors,''
\newblock in {\em Acoustics, Speech and Signal Processing (ICASSP), 2015 IEEE
  International Conference on}. IEEE, 2015.

\bibitem{ho2014marble}
Joyce~C Ho, Joydeep Ghosh, and Jimeng Sun,
\newblock ``Marble: high-throughput phenotyping from electronic health records
  via sparse nonnegative tensor factorization,''
\newblock in {\em Proceedings of the 20th ACM SIGKDD international conference
  on Knowledge discovery and data mining}. ACM, 2014, pp. 115--124.

\bibitem{heiser1995convergent}
Willem~J Heiser,
\newblock ``Convergent computation by iterative majorization: theory and
  applications in multidimensional data analysis,''
\newblock {\em Recent advances in descriptive multivariate analysis}, pp.
  157--189, 1995.

\bibitem{fevotte2011algorithms}
C{\'e}dric F{\'e}votte and J{\'e}r{\^o}me Idier,
\newblock ``Algorithms for nonnegative matrix factorization with the
  $\beta$-divergence,''
\newblock {\em Neural Computation}, vol. 23, no. 9, pp. 2421--2456, 2011.

\bibitem{eagle2009inferring}
Nathan Eagle, Alex~Sandy Pentland, and David Lazer,
\newblock ``Inferring friendship network structure by using mobile phone
  data,''
\newblock {\em Proceedings of the National Academy of Sciences}, vol. 106, no.
  36, pp. 15274--15278, 2009.

\bibitem{viswanath-2009-activity}
Bimal Viswanath, Alan Mislove, Meeyoung Cha, and Krishna~P. Gummadi,
\newblock ``On the evolution of user interaction in facebook,''
\newblock in {\em Proceedings of the 2nd ACM SIGCOMM Workshop on Social
  Networks (WOSN'09)}, August 2009.

\bibitem{yuan2011driving}
Jing Yuan, Yu~Zheng, Xing Xie, and Guangzhong Sun,
\newblock ``Driving with knowledge from the physical world,''
\newblock in {\em Proceedings of the 17th ACM SIGKDD international conference
  on Knowledge discovery and data mining}. ACM, 2011, pp. 316--324.

\bibitem{Wang2014travel}
Yilun Wang, Yu~Zheng, and Yexiang Xue,
\newblock ``Travel time estimation of a path using sparse trajectories,''
\newblock in {\em Proceedings of the 20th ACM SIGKDD International Conference
  on Knowledge Discovery and Data Mining}, New York, NY, USA, 2014, KDD '14,
  pp. 25--34, ACM.

\bibitem{papalexakis2013more}
Evangelos~E Papalexakis, Leman Akoglu, and Dino Ienco,
\newblock ``Do more views of a graph help? community detection and clustering
  in multi-graphs,''
\newblock in {\em Information Fusion (FUSION), 2013 16th International
  Conference on}. IEEE, 2013, pp. 899--905.

\bibitem{snapnets}
Jure Leskovec and Andrej Krevl,
\newblock ``{SNAP Datasets}: {Stanford} large network dataset collection,''
  \url{http://snap.stanford.edu/data}, June 2014.

\bibitem{zhang2014understanding}
Yongfeng Zhang, Min Zhang, Yi~Zhang, Yiqun Liu, and Shaoping Ma,
\newblock ``Understanding the sparsity: Augmented matrix factorization with
  sampled constraints on unobservables,''
\newblock in {\em Proceedings of the 23rd ACM International Conference on
  Conference on Information and Knowledge Management}. ACM, 2014, pp.
  1189--1198.

\bibitem{cai2006tensor}
Deng Cai, Xiaofei He, and Jiawei Han,
\newblock ``Tensor space model for document analysis,''
\newblock in {\em Proceedings of the 29th annual international ACM SIGIR
  conference on Research and development in information retrieval}. ACM, 2006,
  pp. 625--626.

\bibitem{huang2008simultaneous}
Heng Huang, Chris Ding, Dijun Luo, and Tao Li,
\newblock ``Simultaneous tensor subspace selection and clustering: the
  equivalence of high order svd and k-means clustering,''
\newblock in {\em Proceedings of the 14th ACM SIGKDD international conference
  on Knowledge Discovery and Data mining}. ACM, 2008, pp. 327--335.

\bibitem{mu2011empirical}
Yang Mu, Wei Ding, Melissa Morabito, and Dacheng Tao,
\newblock ``Empirical discriminative tensor analysis for crime forecasting,''
\newblock in {\em Knowledge Science, Engineering and Management}, pp. 293--304.
  Springer, 2011.

\bibitem{liu2013tensor}
Ji~Liu, Przemyslaw Musialski, Peter Wonka, and Jieping Ye,
\newblock ``Tensor completion for estimating missing values in visual data,''
\newblock {\em Pattern Analysis and Machine Intelligence, IEEE Transactions
  on}, vol. 35, no. 1, pp. 208--220, 2013.

\bibitem{davidson2013network}
Ian Davidson, Sean Gilpin, Owen Carmichael, and Peter Walker,
\newblock ``Network discovery via constrained tensor analysis of fmri data,''
\newblock in {\em Proceedings of the 19th ACM SIGKDD international conference
  on Knowledge discovery and data mining}. ACM, 2013, pp. 194--202.

\bibitem{papalexakisturbo}
Evangelos~E Papalexakis, Tom~M Mitchell, Nicholas~D Sidiropoulos, Christos
  Faloutsos, Partha~Pratim Talukdar, and Brian Murphy,
\newblock ``Turbo-smt: Accelerating coupled sparse matrix-tensor factorizations
  by 200x,''
\newblock in {\em SIAM SDM}, 2014.

\bibitem{zheng2010collaborative}
Vincent~Wenchen Zheng, Bin Cao, Yu~Zheng, Xing Xie, and Qiang Yang,
\newblock ``Collaborative filtering meets mobile recommendation: A
  user-centered approach.,''
\newblock in {\em AAAI}, 2010, vol.~10, pp. 236--241.

\bibitem{kiers2003fast}
Henk~AL Kiers and Albert Kinderen,
\newblock ``A fast method for choosing the numbers of components in tucker3
  analysis,''
\newblock {\em British Journal of Mathematical and Statistical Psychology},
  vol. 56, no. 1, pp. 119--125, 2003.

\end{thebibliography}

\end{document}